\algrenewcommand\algorithmicrequire{\textbf{Input:}}
\algrenewcommand\algorithmicensure{\textbf{Output:}}
\setlist[enumerate]{leftmargin=.5in}
\setlist[itemize]{leftmargin=.5in}
\crefname{hypothesis}{Hypothesis}{Hypotheses}
\title{Diffeomorphic  Measure Matching with Kernels for Generative Modeling
\thanks{Submitted to the editors on \today{}.
\funding{
BP and BH were supported by the NSF Grant DMS-2208535. BP was 
also supported by the NSF Graduate Research Fellowship Program DGE-1762114. 
PB, BH, and HO were supported by the Air Force Office of Scientific Research under MURI award number FA9550-20-1-0358 (Machine Learning and Physics-Based Modeling and Simulation). 
}}}
\author{Biraj Pandey\thanks{Department of Applied Mathematics, University of Washington, Seattle, WA, USA (\email{bpandey@uw.edu}, \email{bamdadh@uw.edu})}
\and
Bamdad Hosseini\footnotemark[2]
\and 
Pau Batlle\thanks{Department of Computing and Mathematical Sciences, California Institute of Technology, Pasadena, CA, USA 
  (\email{pbatllef@caltech.edu}, \email{owhadi@caltech.edu}).}
\and
Houman Owhadi\footnotemark[3]
}
\newcommand{\mcl}{\mathcal}
\newcommand{\mbb}{\mathbb}
\newcommand{\dd}{{\rm d}}
\newcommand{\Y}{\mcl{Y}}
\newcommand{\U}{\mcl{U}}
\newcommand{\V}{{\mcl{V}}}
\newcommand{\W}{{\mcl{W}}}
\newcommand{\VV}{\mathbb{V}}
\newcommand{\LL}{\mathbb{L}}
\newcommand{\T}{\mcl{T}}
\newcommand{\K}{\mathcal{K}}
\newcommand{\Q}{\mathcal{Q}}
\newcommand{\PP}{\mbb{P}}
\definecolor{darkred}{rgb}{.7,0,0}
\definecolor{darkgreen}{rgb}{.15,.55,0}
\definecolor{darkblue}{rgb}{0,0,0.7}
\DeclareMathOperator*{\argmin}{arg\,min}
\DeclareMathOperator*{\minimize}{minimize}
\DeclareMathOperator*{\st}{s.t.}
\newcommand{\R}{\ensuremath\mathbb{R}}
\renewcommand{\Y}{\ensuremath\mathcal{Y}}
\renewcommand{\U}{\ensuremath\mathcal{U}}
\renewcommand{\W}{\ensuremath\mathcal{W}}
\DeclareMathOperator*{\esssup}{ess\,sup}
\newcommand{\MMD}{{\rm MMD}}
\newtheorem{maintheorem}{Main Theorem}
\begin{document}

\maketitle

% REQUIRED
\begin{abstract}

This article presents a general framework for the transport of probability measures 
towards minimum divergence generative modeling and sampling using ordinary differential equations (ODEs) and 
Reproducing Kernel Hilbert Spaces (RKHSs), inspired by ideas from diffeomorphic matching 
and image registration. A theoretical analysis of the proposed method is presented, 
giving a priori error bounds in terms of the complexity of the model, the number of samples 
in the training set, and model misspecification. An extensive suite of numerical experiments further highlights the 
properties, strengths, and weaknesses of the method and extends its applicability to 
other tasks, such as conditional simulation and inference.

\end{abstract}

% REQUIRED
\begin{keywords}
Measure Transport, Reproducing Kernel Hilbert Spaces, Generative Modeling, Normalizing Flows, Diffeomorphic Matching 
\end{keywords}

% REQUIRED
\begin{MSCcodes}
35Q68 % PDEs in CS
49Q22 % optimal transport
62F15 % Bayesian inference
68T07 % Neural nets
62R07 % statistics of big data
\end{MSCcodes}

\section{Introduction}
\label{sec:intro}

% This article presents a general framework for transport of probability measures 
% towards generative modeling and statistical inference based on 
% the methodology of diffeomorphic matching, 
% the theory of ordinary differential equations (ODEs), and 
% reproducing kernel Hilbert spaces (RKHSs).

Sampling/simulation of probability measures is a fundamental task in data science, computational statistics, and uncertainty quantification (UQ). 
Given a {\it target} probability measure $\nu \in \PP(\R^d)$ for some $d \ge 1$, the goal of sampling is 
to simulate a random variable that is distributed according to this target. 
Markov chain Monte Carlo (MCMC) \cite{robert1999monte, stuart2010inverse,
cotter2013mcmc, hairer2014spectral, cui2016dimension, beskos-geometric-mcmc, betancourt2017geometric, durmus2017nonasymptotic, hosseini2019two, garbuno2020interacting, hosseini2023spectral} and variational inference (VI) \cite{blei2017variational, zhang2018advances, fox2012tutorial} algorithms are  perhaps the 
most well-known families of algorithms for this task, but 
in recent years, there has been a significant increase in interest surrounding
the family of transport-based sampling algorithms \cite{marzouk2016sampling}
due to the impressive performance of generative models \cite{ng2001discriminative, jebara2012machine} such as generative adversarial networks (GANs) \cite{goodfellowGANS, goodfellow2016deep} and 
normalizing flows (NFs) \cite{rezende2015variational, kobyzev2020normalizing, papamakarios2019normalizing}.  
Broadly speaking, these transport-based algorithms find a transport map 
$T^\star$ such that $T^\star\# \eta \approx \nu$ for some reference measure 
$\eta \in \PP(\R^d)$\footnote{In practice it is customary to define 
$\eta$ on a lower dimensional latent space, but we will not consider 
this setting in this article.} that is 
easy to simulate, e.g., a standard normal. Then, samples from the target 
$\nu$ are (approximately) simulated by drawing $z \sim \eta$ 
and evaluating $T^\star(z)$. The map $T^\star$ is a minimum divergence estimator \cite{pardo2018statistical}, often found by solving  problems of the form 
\begin{equation}\label{transport-opt-generic}
    T^\star:= \argmin_{T \in \T} D(T \# \eta, \nu) + \lambda_R R(T),
\end{equation}
where $D: \PP(\R^d) \times \PP(\R^d) \to [0, +\infty]$ is a statistical divergence or loss, 
$\T$ is an appropriate, often parametric, family of transport maps 
from $\R^d$ to $\R^d$, and $R: \T \to [0, +\infty]$ is a regularization/penalty term 
with parameter $\lambda_R \ge 0$ that provides additional stability. 

In this article, we are particularly interested in the setting where $\T$ is the space of maps given by the flow of an ODE,
as popularized in 
\cite{chen2018neural, grathwohl2018ffjord}, i.e., 
\begin{equation}\label{def:transport_map}
    \T := \left\{ T : \R^d \to \R^d  \: \Big| \: T(x;v) = \phi(1, x), \quad \phi_t = v(t, \phi), \quad t \in (0,1], 
    \quad \phi(0, x) = x, \quad v \in \pmb{\Q}   \right\},
\end{equation}
where $\pmb \Q$ is a family of vector fields, often taken to be a neural network (NN) class 
in modern generative modeling 
\cite{chen2018neural, grathwohl2018ffjord, onken2021ot}. In this article, we consider a 
family of such dynamic transport maps based  on the theory of kernel methods 
and present theoretical analysis and benchmark experiments for our method. 
In the rest of this section, we give a summary of our contributions in \Cref{subsec:contributions} followed by a review of relevant literature in \Cref{subsec:lit-review}, and an outline of the article in \Cref{subsec:outline}.

\subsection{Summary of contributions}\label{subsec:contributions}
Below, we will give a concise summary of our mathematical formulation and 
main results with minimal definitions, and refer the reader to \Cref{subsec:RKHS-review}
for a review of the relevant RKHS theory.
Drawing inspiration from previous works in 
image registration and diffeomorphic matching
\cite{glaunes2004diffeomorphic,younes2010shapes, feydy2017optimal, owhadi2023ideas, de2023diffeomorphic} we
take $\pmb{\Q}$  to be a vector
valued Reproducing Kernel Hilbert Space (RKHS) \cite{kadri2016operator} 
on the set $\Gamma:= [0,1] \times \R^d$ 
and consider the following instance of \eqref{transport-opt-generic}: 
\begin{equation}
\left\{
\begin{aligned}\label{KODE-formulation-generic}
    &\minimize_{v \in \pmb \Q} && D( \phi(1, \cdot)\# \eta^N, \nu^N) + \lambda_R \| v\|_{\pmb \Q}^2,  \\
    & \text{subject to (s.t.)} && \phi_t = v(t, \phi), \quad \phi(0, x) = x, 
\end{aligned}
\right.
\end{equation}
where we replaced the reference $\eta$ and the target $\nu$ with their empirical 
approximations $\eta^N$ and $\nu^N$ obtained from $N$-i.i.d. samples.
Towards the design of a practical algorithm, we take $D$ to be the {\it maximum mean discrepancy (MMD)} defined by a  Mercer
kernel $K: \R^d \times \R^d \to \R$, i.e., it is symmetric, positive, and separately 
continuous. Following the definition \eqref{def:MMD} 
this divergence takes the simple form 
\begin{equation}\label{mmd-empirical}
\begin{aligned}
    \MMD_K^2(\phi(1, \cdot)\# \eta^N, \nu^N) & = 
    \frac{1}{N^2} \sum_{i,j=1}^N K( \phi(1, x_i), \phi(1, x_j)) 
    +\frac{1}{N^2} \sum_{i,j=1}^N K( y_i, y_j) \\  
   & \qquad - \frac{2}{N^2} \sum_{i,j=1}^N K(\phi(1, x_i), y_j).
\end{aligned}
\end{equation}
where we used $\{x_i\}_{i=1}^N$  and $\{ y_i\}_{i=1}^N$ to denote i.i.d. samples from $\eta$ and 
$\nu$ respectively.
We further take $\pmb \Q$ 
to be the RKHS of  a matrix-valued kernel $\pmb Q: \Gamma \times \Gamma \rightarrow \R^{d \times d}$ of the diagonal 
form $\pmb Q(s, s') = V(s,s') I $ where $I \in \R^{d \times d}$ is the identity matrix and $V: \Gamma \times \Gamma \to \R$ is a second Mercer kernel, independent of $K$.
Finally, we  consider a set of {\it inducing points}  
$S:= \{s_1, \dots, s_M\} \subset \Gamma$ \cite{titsias2009variational} 
and approximate \eqref{KODE-formulation-generic}
with the discrete problem 
% \begin{equation}
% \left\{
% \begin{aligned}\label{KODE-formulation-with-inducing-points}
%     &\minimize_{ c \in \R^{n \times M}} && \MMD_K( \phi(\cdot, 1)\# \eta^N, \nu^N) 
%     + \lambda_R c^T V(S, S) c,  \\
%     & \st && \phi_t = v(\phi, t), \quad \phi(x, 0) = x, \quad v(s) = \sum_{j=1}^M c^T_{:j} V(s_j, s),
% \end{aligned}
% \right.
% \end{equation}
\begin{equation}\label{KODE-formulation-with-inducing-points}
\left\{
\begin{aligned}
    &\minimize_{\{c_\ell\}_{\ell=1}^d  \subset \R^{M \times d}} && \MMD_K( \phi(1, \cdot)\# \eta^N, \nu^N) 
    + \lambda_R \sum_{\ell=1}^d c_\ell^T V(S,S) c_\ell,  \\
    & \st && \phi_t = v(t, \phi), \quad \phi(0, x) = x, \\ 
    & && v(s) = (v_1(s), \dots, v_d(s)), \quad 
    v_\ell(s) =  c_\ell^T V(S, s),
\end{aligned}
\right.
\end{equation}
where $V(S,S) \in \R^{M \times M}$ denotes the kernel matrix with entries $(V(S,S))_{ij} = V(s_i, s_j)$ and $V(S, s)$ is a (column) vector field with entries $(V(S, s))_i = V(s_i, s)$. The coefficient vectors $\{ c_\ell \}_{\ell =1}^d \subset \R^M$ denote the parameters of our model 
and are the main target of training in \eqref{KODE-formulation-with-inducing-points}. 
We also note that once the requisite kernels $K, V$, the regularization parameter $\lambda$, and the set of inducing points $S$ 
are chosen, then the above formulation can be implemented using an off-the-shelf ODE solver 
to approximate the flow $\phi(1, \cdot)$.
% Here $\pmb Q(S,S)$ is the block kernel
% matrix with entries $\pmb Q(S,S)_{ij} = \pmb Q(s_i, s_j) \in \R^{d \times d}$ for $i,j \in \{1, \dotsn M\}$, 
% $\mathbf{C}:= ( \mathbf{C}_1, \dots \mathbf{C}_M )$ is a 
% column coefficients block-vector  
% with entries $\mathbf{C}_j \in \R^d$
% $\mathbf{C}_j$ denotes the $j$-th 
% row of $\mathbf{C}$, and we denote the RKHS norm compactly as $\mathbf{C}^\intercal \pmb{Q}(S, S) \mathbf{C} := \sum_{i=1}^M \sum_{j = 1}^M \mathbf{C}_i^\intercal \pmb Q(s_i, s_j) \mathbf{C}_j$. 
Our main contributions are the theoretical analysis, the  
algorithmic development and the benchmarking of the proposed solution \eqref{KODE-formulation-with-inducing-points}
for  sampling and inference.

For our theoretical analysis
we primarily consider the setting where 
the reference $\eta$ and target $\nu$ are compactly supported and 
consider the closely related problem
\begin{equation}
\left\{
\begin{aligned}\label{KODE-formulation-with-inducing-points-related}
    &\minimize_{ \{c_\ell\}_{\ell=1}^d  \subset \R^{M}} && \MMD_K( \phi(1, \cdot)\# \eta^N, \nu^N)  \\
    & \st && \phi_t = v(t, \phi), \: \: \phi(0, x) = x, \: \:
    v(s) = (v_1(s), \dots, v_d(s)), \\ 
    & && v_\ell(s) =  c_\ell^T V(S, s), \: \:
    \sum_{\ell=1}^d c_\ell^T V(S,S) c_\ell  \le r^2,
\end{aligned}
\right.
\end{equation}
for some $r^2> 0$. The above problem is closely related to  \eqref{KODE-formulation-with-inducing-points}, indeed 
the latter can be obtained by using a Lagrange multiplier to 
relax the inequality constraints and leads to more 
efficient algorithms; see 
\Cref{subsec:regularization-bounds} for more details on how the 
two minimizers are related. 
We then obtain the following theorem 
that gives a quantitative characterization of  the approximation 
and generalization errors of the transport map obtained by solving 
\eqref{KODE-formulation-with-inducing-points-related}:

\begin{maintheorem}\label{thm:main}
Let $\Omega \subset \R^d$ be bounded and let 
$\Gamma = [0,1] \times \Omega$. 
Suppose $K: \Omega \times \Omega \to \R_{\ge 0}$ is a  Mercer kernel with RKHS $\K$  
 such that $\sup_{x, x' \in \Omega} \frac{\| K(x, \cdot) - K(x', \cdot) \|_\K}{|x - x'|} < +\infty$.
Suppose $\eta, \nu \in \PP(\Omega)$, 
and let $V: \Gamma \times \Gamma \to \R$ be a Mercer 
kernel such that $V \in C^{2k}(\Gamma \times \Gamma)$ and 
that elements of its RKHS $\mcl{V}$ vanish at the boundary of $\Omega$.
Suppose there exists a Lipschitz vector field $v^\dagger$ 
such that 
% each of its components $v^\dagger_\ell \in \pmb{\V}$ for $\ell =1, \dots, d$ and 
 $ T(\cdot; v^\dagger)\# \eta = \nu$ (recall the notation of \Cref{def:transport_map}). For some $r >0$,
let $T(\cdot; v_r^{S,N})$ be the transport map defined by  solving \eqref{KODE-formulation-with-inducing-points-related}
and define $h_S:= \sup_{s\in \Gamma} \inf_{s' \in S}  | s- s'|$ to be the
fill-distance of  $S \subset \Gamma$. 
Then, if $h_S > 0$ is sufficiently small
it holds with probability $1-\delta$, for $\delta >0$,  that
    \begin{equation*}
    \begin{aligned}
        \MMD_K( T(\cdot; v^{S, N}_r) \# \eta, \nu) 
        & \le C_2 \Bigg[ \big( \exp(C_1 r) - 1 \big) 
        h^k_S 
        + 
        \sqrt{\frac1N} \left( 2 + \sqrt{\log \left( \frac1\delta \right) } \right) \\   
        & \qquad + \frac{ \exp( L_{v^\dagger} ) - 1  }{L_{v^\dagger}} \inf_{v \in \pmb \Q_r} \| v - v^\dagger \|_\infty
        \Bigg],
        \end{aligned}
    \end{equation*}
    where $C_1, C_2 >0$ are constants  independent of $S$, $N$, and $v^\dagger$,  
    $L_{v^\dagger} := \sup_{s,s' \in \Gamma} \frac{|v^\dagger(s) - v^\dagger(s')|}{|s- s'|}$
    is the Lipschitz constant of $v^\dagger$, and 
    $\pmb \Q_r := \{ v: \Gamma \to \R^d \mid \sum_{\ell=1}^d \| v_\ell \|_{ \V}^2 \le r^2 \}$.
\end{maintheorem}
A complete proof of this theorem, including extensions and other useful auxiliary results, is given in \Cref{sec:theory}. The above result 
gives 
a complete picture of the approximation bias, 
sample complexity of our algorithm, and model misspecification error. In the context 
of generative models, and more broadly, of computational transport, 
such quantitative error bounds that account for both approximation errors 
and sample complexity are very challenging to obtain; see 
for example \cite{hutter2021minimax, divol2022optimal}. 
We note that the above bound does not take into account 
the approximation error of the ODE solver \eqref{KODE-formulation-with-inducing-points-related}
or the optimization gap in finding a global minimizer, but 
these can be accounted for by using additional terms in our 
upper bound on a case-by-case basis.

Our bound 
is interesting in  various aspects: (1) The first term, involving 
the fill-distance $h_S$, controls the approximation 
error of the model and reflects the  smoothness  of the underlying RKHS 
$\V$ and the complexity of our 
model class. 
% through the fill-distance/size of $S$. 
Indeed, this bound matches classical rates for 
scattered data approximation with kernel methods \cite{wendland2004scattered};
(2) The second term is a generalization bound for minimum 
MMD estimators following \cite{briol2019statistical} and
matches the minimax optimal approximation rate of kernel 
mean embeddings of measures \cite{sriperumbudur2016optimal, tolstikhin2017minimax}; 
(3) The third term can be viewed as a model 
misspecification error, i.e., 
choosing a kernel $V$ such that the resulting RKHS does not include the ground 
truth vector field $v^\dagger$ or that $r$ is simply too small. Thus, if the 
model is well-specified, i.e.,  $v^\dagger \in \pmb \Q_r$, then this 
term will simply vanish. Of course, there is a trade-off here since the constant 
in the first term blows up exponentially with $r$. 
Finally, we note that
while the second term in our bound is 
dimension independent, the first term is effectively dependent on 
the dimension of the problem since the fill-distance $h_S$ scales 
as $M^{-1/d}$ and so to ensure that $h_S$ is sufficiently small 
one needs an increasingly large set of inducing points, however, 
the resulting approximation error still scales as $M^{-k/d}$ 
which is acceptable when $k > d$, i.e., the class $\V$ is sufficiently smooth.

For our numerical results we solve
\eqref{KODE-formulation-with-inducing-points} using off-the-shelf 
ODE solvers and stochastic gradient descent; see \Cref{sec:algorithms}
for details. We refer to our algorithm as KODE (Kernel ODE transport)
and benchmark it on various data sets 
in low- or high-dimensional settings; on occasion, we compare KODE 
with the 
OT-Flow algorithm of \cite{onken2021ot}, which is a neural net method 
based on the dynamic formulation of the optimal transport (OT) problem. 
\Cref{fig:toy_non_auto_results} shows an example  of our 
results for a  set of 2D benchmarks of different complexity 
where we (forward) transport a standard Gaussian reference $\eta$ to a 
complicated target $\nu$. The bottom row of this figure
shows the backward transport/normalizing flow
problem of pulling $\nu$ back to $\eta$ simply by running the ODE 
backward without any training, revealing that our transport maps 
are indeed diffeomorphic. We also present  extensive 
studies on the effects of hyper-parameters and the choice of kernels 
and RKHS norms.

Finally, we also present a simple modification of our formulation 
that enables KODE to perform likelihood-free/amortized 
inference, i.e., we transport $\eta$ to $\nu$ using additional 
constraints leading to a triangular transport map which we call 
Triangular KODE (T-KODE), based on the theory of triangular transport of measures \cite{baptista2023conditional}. The resulting model is only trained once 
but it can 
identify arbitrary conditional measures of $\nu$ along pre-specified 
coordinates
as demonstrated in \Cref{fig:conditioning_time_dep}. 
Details and additional benchmarks for this method are collected in 
\Cref{sec:triangular-transport}.

\begin{figure}[ht]
\centering
\begin{overpic}[width=0.8\textwidth, trim=5 5 5 5, clip]{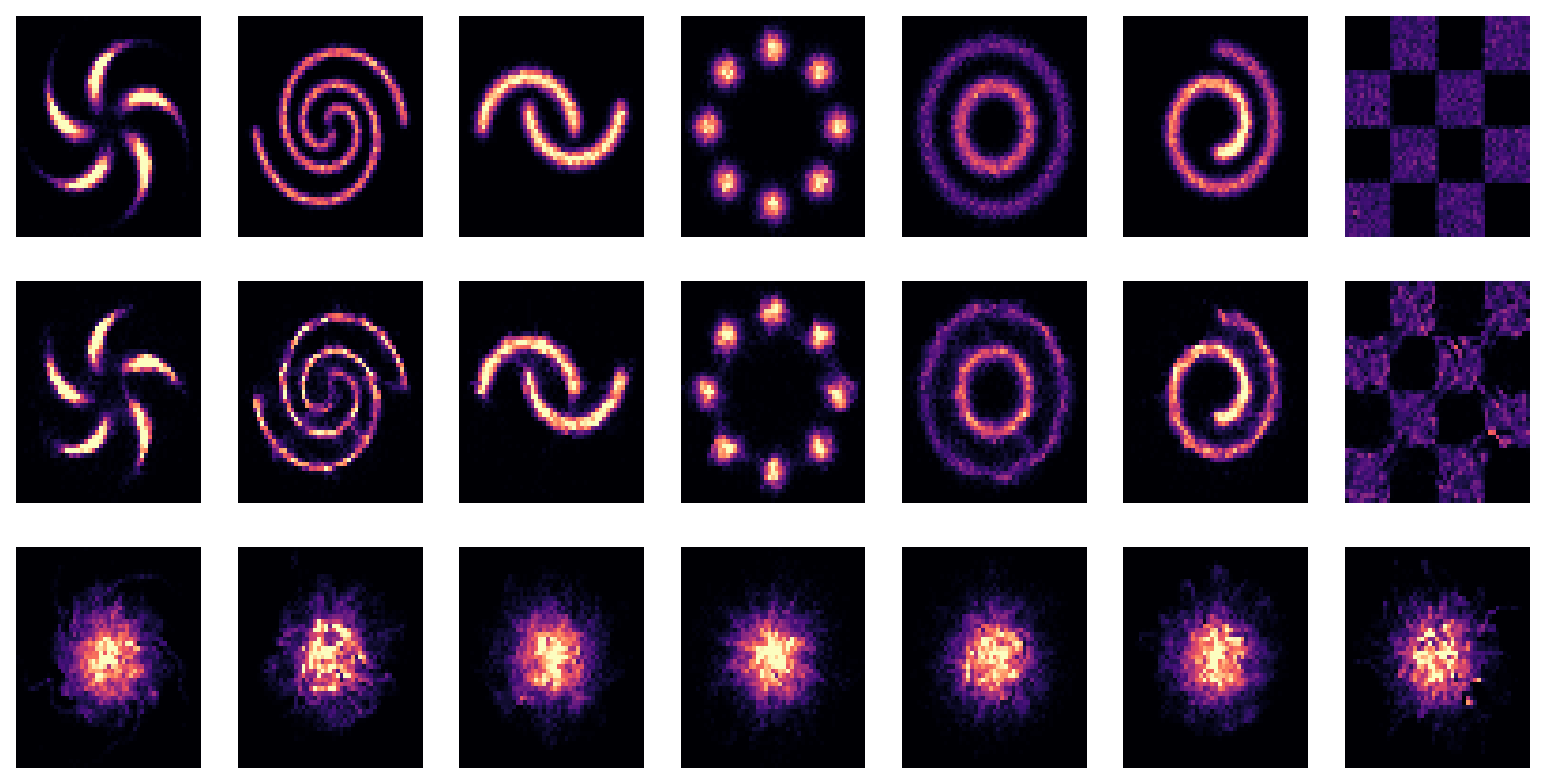}
    \put(-7, 30){\rotatebox{90}{\parbox{3cm}{\centering {Target \\ $\nu$} }}}
    \put(-7, 14){\rotatebox{90}{\parbox{3cm}{\centering {Forward \\ Transport}}}}
    \put(-7, -4){\rotatebox{90}{\parbox{3cm}{\centering {Backward \\ 
    Transport}}}}
\end{overpic}
\caption{Transport experiments on 2D benchmarks using  KODE: (Top row) The empirical samples from the target $\nu$; (Middle row) Samples generated by transporting a standard Gaussian reference $\eta$; (Bottom row) Samples generated by transporting $\nu$ backward towards $\eta$.}
\label{fig:toy_non_auto_results}
% \vskip -0.2in
\end{figure}

\begin{figure*}[ht]
% \vskip 0.5 cm
\centering
\begin{overpic}[width=.8\textwidth]{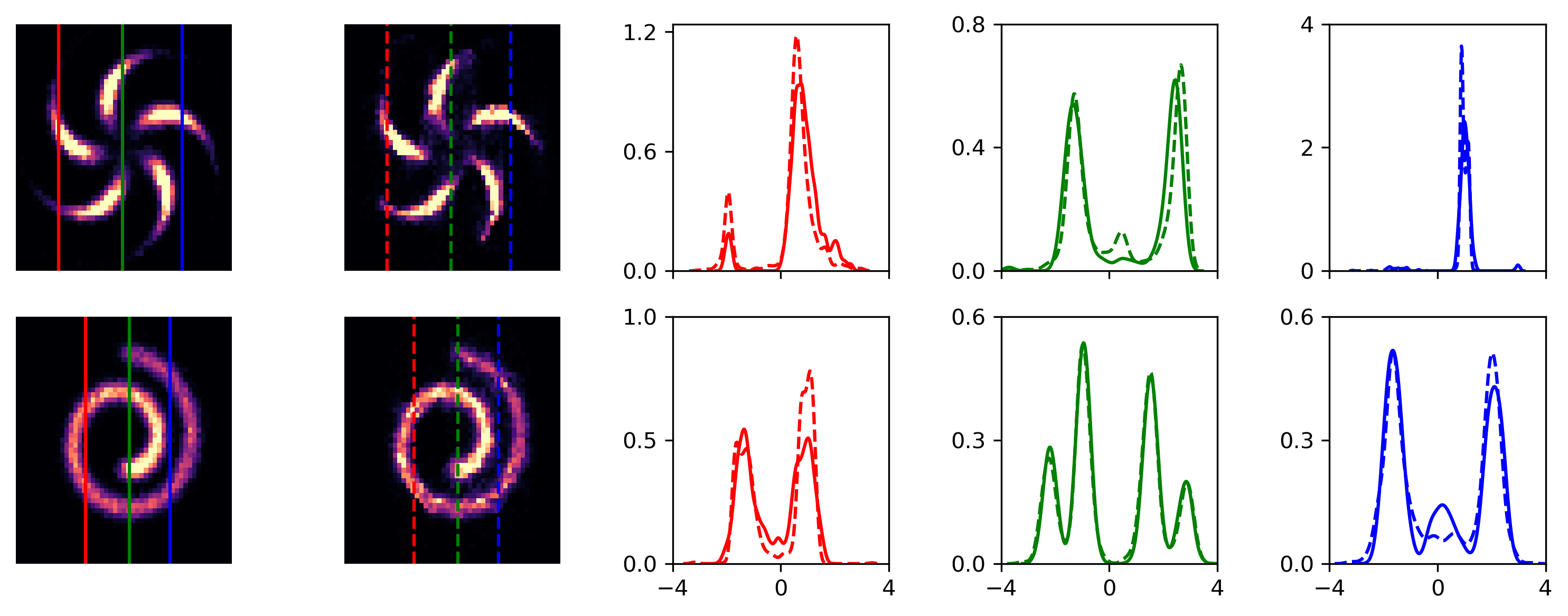}

    % first column
    \put(8, 1){{\parbox{3cm}{\tiny $y$}}}
    \put(-1, 10){\rotatebox{90}{\parbox{3cm}{\tiny $u$}}}
    \put(-1, 30){\rotatebox{90}{\parbox{3cm}{\tiny $u$}}}

    % Second column
    \put(28, 1){{\parbox{3cm}{\tiny $y$}}}  

    % remaining columns
    \put(49, -1){{\parbox{2cm}{\tiny  $u$}}}
    \put(70, -1){{\parbox{2cm}{\tiny $u$}}}
    \put(91, -1){{\parbox{2cm}{\tiny $u$}}}

    % Large titles
    \put(2, -4){{\parbox{3cm}{\footnotesize Target $\nu$}}}
    \put(17, -4){{\parbox{3cm}{\footnotesize \centering Generated samples \\ (Joint)}}}  
    \put(60, -4){{\parbox{3cm}{\footnotesize \centering Generated samples \\ (Conditional)}}}
\end{overpic}
\vskip 0.7cm
\caption{Conditional sampling using Triangular KODE for  two-dimensional benchmarks: The left-most column shows the target measure $\nu$ while the red, green, and blue lines denote slices along which conditional samples are generated. The next column shows generated samples by KODE. The remaining panels compare ground truth (solid lines) and 
generated (dashed lines) kernel density 
estimators of the requisite conditional distributions using  triangular KODE.}
\label{fig:conditioning_time_dep}
\end{figure*}

% with a particular choice of 
% the kernel $\pmb Q$ leading to vector fields that are both smooth in space and continuous
% differentiable in time. We empirically show that these are better than 
% traditional choice of $\pmb Q$ that is $L^2$ in time. We further show how 
% our formulation can be modified trivially in order to obtain a map that 
% is capable of likelihood-free conditioning.

% focus on problem 
% \eqref{KODE-formulation-generic} including a representer formula for 
% generic choices of $D$ that leads to efficient numerical strategies 
% for solving this problem. 
% We provide further theoretical results for the particular case 
% where $D$ is taken to be the MMD distance. [place holder]

% Our applied contribution is an efficient computational algorithm based on the above 
% theory by combining techniques from sparse GPs supported by extensive numerical tests 
% in the setting where $D$ is once again taken to be the MMD [place holder]

% \bp{
% We have a general formulation with a divergence metric and a regularization parameter that penalizes RKHS norms. If we use MMD as divergence and minimize integral of RKHS norm over time, we get \cite{Glaunes_2004_DiffMatch}. If we take the adversarial approach where the discriminator is the MMD, we get MmdGANs \cite{Dziugaite_2015_MMD_critic}. If we use Sinkhorn divergence as loss, we get \cite{Lara_Gonzalez-2022_DiffeoSinkhorn}.}

\subsection{Relevant Literature}\label{subsec:lit-review}

Below, we give a summary of the relevant literature to our work. 
Since the literature on generative modeling and transport is vast 
we keep the discussion focused on works that are closely 
related to ours and give a few references for broader topics.

\subsubsection{Diffeomorphic matching and learning} 

The early developments in matching measures originated from the field of diffeomorphic matching or learning, which aims to align different data sets by transforming them into a common coordinate system \cite{Zitova_Flusser_2003, beg2005computing, joshi2000landmark}. An example of an application includes aligning the images of the same scene captured from various viewpoints, sensors, and times of day.
Computational algorithms for diffeomorphic matching 
 shares a lot of similarities with our proposed method \cite{younes2010shapes, Younes_2019} as well as many recent techniques in generative modeling such as continuous NFs \cite{grathwohl2018ffjord, kobyzev2020normalizing}.
 Indeed, the problem of matching measures and distributions  has been studied extensively in \cite{bauer2015diffeomorphic, feydy2017optimal, feydy2018global} although these works were mainly focused on 
matching of images and shapes in 2D or 3D as opposed to sampling of 
high-dimensional measures.
 
 In diffeomorphic matching, flows of ODEs are utilized as diffeomorphic maps to continuously deform one image or volume into another. Notably, these diffeomorphic maps are often chosen to be flows of vector fields belonging to Reproducing Kernel Hilbert Spaces (RKHS) \cite{younes2010shapes, owhadi2023ideas, de2023diffeomorphic}, distinguishing them from current methods in machine learning that primarily use neural network parameterizations. 
 In this light, our proposed methodology also falls within 
 the category of diffeomorphic matching with a particular 
 focus towards sampling (possibly) high-dimensional 
probability measures. 

 Closely related approaches to ours in this domain include  \cite{glaunes2004diffeomorphic}, which was a major inspiration for us and 
 employs a  formulation that is a subset  of our framework. Also, the 
 theoretical analysis in that work is limited to existence and 
 consistency results as opposed to our quantitative rates.
 The recent article 
 \cite{de2023diffeomorphic}  also proposes a similar formulation to ours, with the main differences being 
 the choice of Sinkhorn divergences in place of MMD and the focus on 2D or 3D examples. 
 Another inspiration for our work is \cite{owhadi2023ideas},
which connects modern techniques, such as residual neural nets to 
the formulation of algorithms for diffeomorphic matching, although that 
article is primarily focused on supervised learning as 
opposed to generative modeling.
 Another important point of departure for us from the aforementioned 
 works  is that they utilize geodesic shooting  \cite{Younes_2019, owhadi2023ideas} to compute their transport maps as 
 opposed to direct minimization 
 in our setting.  Exploring the geodesic shooting method in our 
 framework is interesting but non-trivial, since the derivation of 
 that method  relies on the assumption that the 
 RKHS norm of $v$ is of the form 
 $\| v \|_{\pmb{\Q}}^2 = \int_0^1 \| v(t, \cdot) \|_{\pmb{\Q}'}^2 dt$
 where ${\pmb{\Q}'}$ is an RKHS on $\Omega$. Our experiments 
 in \Cref{sec:experimental_setup} suggest that imposing 
 additional smoothness in time results in better performance and 
 smoother transport maps.
 Finally, we mention the master's thesis \cite{raviola2022training} that gives an excellent 
 overview of the connection between kernel ODEs and various Neural ODE models
  from the perspective of optimal control, including applications to measure transport.

\subsubsection{ODE models in machine learning}

The past decade has seen an explosion in the research and 
development of increasingly expressive generative models, specially in 
the context of image generation, going back to the introduction 
of GANs in \cite{goodfellowGANS}. 
Since then, numerous extensions of GANs 
\cite{arjovsky2017wasserstein, gulrajani2017improved, binkowski2018demystifying, li2017mmd}
have been proposed, followed by new 
families of generative models such as NFs \cite{rezende2015variational, papamakarios2019normalizing, kobyzev2020normalizing} and
more recently diffusion models \cite{song2020score, cao2022survey}
and stochastic interpolants \cite{albergo2022building, albergo2023stochastic}. 
A core idea in the aforementioned flow models is the parameterization of 
transport maps via the compositions of parametric maps that are simple and, often, invertible. 
In the case of normalizing flows, these constraints were partially motivated 
by the use of the Kullback-Liebler (KL) divergence as a training loss 
that takes a particularly simple form during training for the normalizing flow direction, i.e., 
pulling the target $\nu$ to the reference $\eta$. Then, generating new samples from $\nu$
requires the inversion of the resulting flow in an efficient manner. This line of thinking 
led to considerable interest in continuous time dynamic models, broadly referred to as 
neural ODEs \cite{haber2020neural, chen2018neural, grathwohl2018ffjord}; simply put, a model 
akin to \eqref{KODE-formulation-generic} with $v$ parameterized via a neural net. 

The analysis of neural ODE models has attracted some attention in the literature 
although most of the existing works focus on the proof of universal approximation 
results for diffeomorphic maps \cite{li2022deep, ishikawa2023universal}
and do not provide a quantitative approximation 
rates as we do in \Cref{thm:main} and do not consider generalization 
bounds. To our knowledge, 
\cite{marzouk2023distribution} presents the most comprehensive 
analysis of neural ODEs from the perspective of transport 
problems, focusing on generalization bounds for density estimation,
as opposed to the sampling/generative modeling in our case. The results
of that work combine generalization bounds and approximation errors 
for neural ODE models, and in that sense, are very similar to our 
main theorem, however: (1) their  
analysis is limited to the normalizing flow problem with the KL loss; (2) 
it is primarily focused on neural network models; and (3) concerns 
density estimation as opposed to sampling in our case. 

We also note that there is a growing literature on the
applied analysis of transport problems as 
it pertains to generative modeling for non-ODE models, such as 
\cite{baptista2023approximation} which proposes master theorems 
for characterizing the approximation error of transport maps 
for sampling that we utilize in the proof of our main results. The articles \cite{zech2022sparse-I, zech2022sparse-II}
consider sparse polynomial approximations of triangular transport maps; 
a particularly useful construction in the context of normalizing flows.
While the aforementioned articles are primarily concerned with the 
approximation errors of parameterized transport maps, 
the articles \cite{irons2022triangular, wang2022minimax}
study the statistical consistency and sample complexity 
of such problems as it pertains to triangular maps.

\subsubsection{Connections to OT}

The theory of OT is now a mature  field 
of mathematics \cite{villani2009optimal, santambrogio2015optimal}
with deep connections to probability theory, partial differential equations,
and statistics. In recent years, there has been a growing interest 
in the computational aspects of optimal transport \cite{peyre2019computational} both in the development of 
algorithms for computing optimal maps \cite{cuturi2013sinkhorn}
as well as their applications in fields such as machine learning, data science \cite{arjovsky2017wasserstein, montesuma2021wasserstein, kuhn2019wasserstein, flamary2018wasserstein, zhuang2022wasserstein}, biology \cite{schiebinger2019optimal}, medicine \cite{gramfort2015fast},
shape analysis \cite{su2015optimal}, and economics
\cite{galichon2018optimal}. The wide adoption of 
OT in practice has also led to growing interest in the 
applied analysis of OT problems and, in particular, OT maps
\cite{hutter2021minimax, divol2022optimal, deb2021rates, del2023improved, pooladian2023minimax}. Our formulation of KODE resembles the 
dynamic formulation of OT, often known as 
the Benamou-Brenier formulation \cite[pp.~159]{villani2009optimal} which is also 
the basis of the OT-Flow algorithm \cite{onken2021ot}, but it is different 
from that problem in two important aspects: (1) our dynamics do not match 
the target measure $\nu$ exactly (we only aim to minimize the MMD) while the 
Benamou-Brenier formulation aims to push the reference $\eta$ to the 
target $\nu$ exactly; (2) we regularize our velocity fields in an RKHS 
in both space and time, while Benamou-Brenier minimizes a Bochner integral.

% \subsection{Notation}

% \begin{itemize}
% \item $\PP(\Omega)$ Borel prob. measures
%     \item pushforward
%     \item column and row vectors
%     \item For $\mu \in \PP(\Omega)$, we denote the weighted $L_\mu^p$ norm by $\| f\|_{L_\mu^p(\Omega;\Omega')} :=  \left( \int_\Omega | f|^p d\mu \right)^{1/p}$, where $|\cdot|$ is the usual Euclidean norm and $L_\mu^p(\Omega;\Omega')$ is the corresponding function space.
% \end{itemize}

\subsection{Outline}\label{subsec:outline}
The rest of the article is organized as follows: \Cref{sec:theory} contains our theoretical 
analysis of the KODE methodology and in particular, the proof of \Cref{thm:main}. Details 
of our algorithms and numerical implementations are collected in \Cref{sec:algorithms}
followed by numerical results and experiments in \Cref{sec:experimental_setup}. 
We conclude the article in \Cref{sec:conclusion} with a summary of remaining open questions and future directions.

% \subsection{Kernel methods in transport}

% \subsection{Sparse GPs}

% \subsection{Stein Variational Gradient Descent (SVGD)}

% Idea registration from Houman
%Resnet + NeuralODE + Normalizing Flows + MMD GAN
% GPs
% Optimal Transport (Gradient Flow + Stein's Variational Gradient Flow + MMD Gradient Flow (Anne Korba))
% Houman's paper (Ideas have shape)
% Conditioning (Knotk-Rosenblatt maps/Triangular maps)
% MGAN paper
% Triangular Flows
% - Optimal Transport Loss \\
% - Triangular flows \\
% - Deep ONet \\
% - Fourier Neural Networks
% - Sparse GPs
% - Random feature models

\section{Theory} \label{sec:theory}
We dedicate this section to the theoretical analysis of problem \eqref{KODE-formulation-with-inducing-points}, proving a series of auxiliary results that give
\Cref{thm:main} as a corollary but also generalize the setting of that result.

\subsection{Brief review of scalar and vector valued  RKHSs and MMD}\label{subsec:RKHS-review}

We begin with a brief review of kernel methods in the classic setting of 
real-valued kernels as well as operator/matrix-valued kernels since we need 
both for our theoretical exposition. For brevity, we will not give an exhaustive 
treatment of these topics and only review the material needed in the paper.
The interested reader may refer to \cite{berlinet-RKHS, owhadi2019operator}
for standard RKHS theory; \cite{alvarez2012kernels, kadri2016operator, owhadi2023ideas} for 
operator/matrix-valued kernels; and \cite{muandet2017kernel} for a review of kernel  
mean embeddings and MMD.

For a domain $\Omega \subseteq \R^d$ a function $K: \Omega \times \Omega \to \R$ is 
called a symmetric and positive definite kernel on $\Omega$ if $K(x,x') = K(x', x), \: 
\forall x,x' \in \Omega$ and 
for any collection of points $X:=\{x_1, \dots, x_M\} \subset \Omega$ 
the kernel matrix $K(X, X)\in \R^{M \times M}$, with entries $K(X,X)_{ij} = K(x_i, x_j)$,
is positive definite. 
We say that the kernel $K$ is {\it Mercer} if it is separately continuous (i.e., continuous 
in each of its inputs) in 
addition to being symmetric and positive definite.
Associated to the kernel $K$ is an RKHS $\K$ with 
inner product $\langle \cdot, \cdot \rangle_\K$ and norm $\| \cdot \|_\K$
 satisfying the reproducing property $f(x) = \langle f, K(x, \cdot ) \rangle_\K$
 for all $f \in \K$ and $x \in \Omega$.
% Let $\MM(\Omega)$ denote the space of (signed) Radon measures on $\Omega$. 
% We now define the RKHS of $K$ 
% $$\K:= \left\{ f(x) = \int_\Omega K(x,y) \mu_f( \dd y) \mid \mu_f \in \MM(\Omega) \text{ and } 
% \int_{\Omega \times \Omega} K(x, y) \mu_f(\dd x) \mu_f(\dd y) < +\infty
% \right\}$$
% with inner product $\langle f , g \rangle_\K 
% := \int_{\Omega \times \Omega} K(x,x') \mu_f( \dd x) \mu_g(\dd x)$
% and norm $\| f \|_\K := \sqrt{\langle f, f \rangle_\K}$. 
% The definition of the inner product above 
% further implies that the elements of $\K$ 
%  satisfy the well-known reproducing property:
% $f(x) = \langle f, K(x, \cdot \rangle_\K$ for all $x \in \Omega$.
Let us introduce the shorthand notation $K(X, x) := (K(x_1, x), \dots, K(x_M, x) ) \in \K^M$
as a {\it column vector field}, similarly $K(x, X)$ for the analogous 
{\it row vector field}. 
Then, functions of the form $f(x) = c^T K(X,x)$ for a 
(column) vector $c \in \R^M$ 
 naturally belong to $\K$ and, by the reproducing property, we have the useful identity
$\| f \|_\K^2 = c^T K(X,X) c$. For a second function 
$g(x) = (c')^T K(X', x)$, with a second point cloud $X' = \{x'_1, \dots, x'_{M'}\} \subset \Omega$ and vector of coefficients $c' \in \R^{M'}$, we also have the identity 
$\langle f, g \rangle_\K = c^T K(X, X')c'$.

% Introducing the shorthand notation $f(X) = (f(x_1), \dots, f(x_M)) \in \R^M$, we consider 
% the optimal recovery/interpolation problem, for a ground truth function $f^\dagger$: 
% \begin{equation}\label{scalar-optimal-recovery}
%     f^\ast : = \argmin_{f\in \K} \| f \|_\K \quad \st \quad  f(X) = f^\dagger(X).
% \end{equation}
% We recall that the minimizer $f^\star$ can be identified in closed form 
% using the celebrated representer formula for kernel interpolation 
% \begin{equation}\label{scalar-representer-theorem}
%     f^\ast(x) = K(x, X) K(X,X)^{-1} f^\dagger(X).
% \end{equation}

Given the Mercer kernel $K$ we also define the space $\PP_K(\Omega) \subset \PP(\Omega)$ 
of Borel probability measures $\mu$ for which $\int_\Omega \sqrt{K(x,x)} \mu(\dd x) <+\infty$
along with the kernel mean embedding $I_K: \PP_K(\Omega) \to \K$ where 
$I_K(\mu) := \int_\Omega K(x, \cdot) \mu(\dd x)$. With this notation at hand, we 
can now introduce the MMD, as a discrepancy defined on the space $\PP_K(\Omega)$:
\begin{equation}\label{def:MMD}
\begin{aligned}
        \MMD_K(\rho_1, \rho_2) & :=  \left\| I_K(\rho_1) - I_K(\rho_2) \right\|_\K \\ 
    &\equiv \Bigg( \int_{\Omega \times \Omega} K(x,x') \rho_1( \dd x) \rho_1(\dd x') 
    + \int_{\Omega \times \Omega} K(x,x') \rho_2( \dd x) \rho_2(\dd x') \\
    & \qquad - 2 \int_{\Omega \times \Omega} K(x,x') \rho_1( \dd x) \rho_2(\dd x')
    \Bigg)^{1/2},
\end{aligned}
\end{equation}
We can naturally extend the MMD to all of $\PP(\Omega)$ by setting 
its value to $\infty$ if either of the input measures do not belong to 
$\PP_K(\Omega)$. We say the kernel $K$ is {\it universal} if $\MMD_K(\rho_1, \rho_2) = 0$ 
implies that $\rho_1 = \rho_2$, in which case the MMD 
becomes a divergence in the parlance of \cite{birrell2022f}.
Finally, observe that by taking  $\rho_1, \rho_2$ in the above 
formula to be empirical measures, for example 
$\rho_1 = \frac{1}{M} \sum_{i=1}^M \delta_{x_i}$ and $\rho_2 = \frac{1}{M'} 
\sum_{i=1}^{M'} \delta_{x'_i}$ for $\{x_i \}_{i=1}^M, \{x_i' \}_{j=1}^{M'} \subset \Omega$, yields the familiar expression 
\begin{equation*}
    \begin{aligned}
        \MMD_K(\rho_1, \rho_2)  =  
        \Bigg( \frac{1}{M^2}\sum_{i,j=1}^N K(x_i, x_j)  
    + \frac{1}{{M'}^2} \sum_{i,j=1}^M K(x'_i, x'_j)  
    - \frac{2}{MM'} \sum_{i=1}^M \sum_{j=1}^{M'} K(x_i, x'_j)
    \Bigg)^{1/2},
\end{aligned}
\end{equation*}
which gives \eqref{mmd-empirical} in our introductory formulation 
of KODE.

Analogously to the case of real-valued RKHSs, we say that
a matrix/operator-valued kernel $\pmb{Q}: \Omega \times \Omega \to \R^{d \times d}$
is Mercer if it is separately continuous,  
symmetric and positive definite, i.e., 
$\pmb{Q}(x, x') = \pmb{Q}(x', x)^T$ and for any set of points $X = \{ x_1, \dots, x_M\} \subset \Omega$
and block-vector $Y = [y_1, \dots, y_M] \in \R^{d \times M}$ \footnote{One 
can think of $Y$ as a matrix, but it is more helpful in this context to 
consider it as a column vector of size $M$ whose entries  are 
in $\R^d$, i.e., a block-vector.} 
it holds that 
$  Y^T \pmb{Q}(X, X) Y :=  \sum_{i,j=1}^M y_i^T \pmb{Q}(x_i, x_j) y_j \ge 0$
where we introduced our compressed notation for the multiplication of 
block-vectors with a matrix. Every Mercer kernel $\pmb{Q}$ 
is in one-to-one correspondence with a (vector-valued) RKHS $\pmb{\Q}$ of functions 
$q: \Omega \to \R^d$ equipped with the inner product $\langle \cdot, \cdot \rangle_{\pmb\Q}$
and norm $\| \cdot \|_{\pmb\Q}$ satisfying the reproducing property: 
$\langle q, \pmb{Q}(x, \cdot) \rangle_{\pmb\Q} = q(x)$. For 
functions of the form $q(x) = \sum_{j=1}^M c_j^T \pmb Q(x_j, x)$ and 
$q'(x) = \sum_{j=1}^{M'} {c'_j}^T \pmb Q(x'_j, x)$ 
with column coefficient vectors $\{ c_j \}_{j=1}^M, \{c'_j\}_{j=1}^{M'} \subset \R^d$.
Further introduce the  block-vectors $C = [c_1,\dots, c_M]  \in 
\R^{d \times M}$ and $C'=[c'_1,\dots, c'_{M'}]  \in \R^{d \times M'}$
and point clouds $X = \{x_1, \dots, x_M\} \subset \Omega$ and $X'= \{x'_1, \dots, x'_{M'}\} 
\subset \Omega$ as before.  We then have the identity 
$\langle q, q' \rangle_{\pmb \Q} := 
C^T \pmb Q(X, X') C' = \sum_{i=1}^M \sum_{j=1}^{M'} 
c_i^T \pmb Q(x_i, x'_j) c'_j$ and subsequently $\| q \|_{\pmb \Q}^2 = C^T \pmb Q(X, X) C$.
Of particular importance to our exposition is the family of diagonal 
matrix-valued kernels of the form
$\pmb V(x, x') = V(x, x') I$ with associated RKHS $\pmb \V$  where $I \in \R^{d\times d}$ is the identity 
matrix and $V: \Omega \times \Omega \to \R$ is a scalar-valued 
Mercer kernel with RKHS $\V$.
In this case, the RKHS $\pmb\V$ can be identified as 
    $\pmb\V  = \{ v:\Omega \to \R^d \mid v_i \in \V, \quad i = 1, \dots, d \}$ 
    where we used $v_i$ to denote the $i$-th  component of $v$, i.e., 
    $v(x) = ( v_1(x), \dots, v_d(x) )$. 
    Furthermore, we have that $\| v \|_{\pmb \V}^2 = \sum_{i=1}^d \| v_i \|_\V^2$ 
    and $\langle v, v' \rangle_{\pmb\Q} = \sum_{i=1}^d \langle v_i, v'_i \rangle_\V, \: 
    \forall v,v' \in \pmb \V$.

    % Considering a ground truth, vector valued function $q^\dagger$, we can 
    % consider the analogous optimal recovery problem to \eqref{scalar-optimal-recovery}
    % for vector valued functions:
    % \begin{equation}\label{vector-valued-optimal-recovery}
    %     q^\star = \argmin_{q \in \pmb \Q} \| q \|_{\pmb \Q} \quad \st \quad 
    %     q(X) = q^\dagger(X)
    % \end{equation}
    % With our compact notation the solution can be expressed  using a similar 
    % formula to \eqref{scalar-representer-theorem}
    % \begin{equation}\label{vector-valued-representer-formula}
    %     q^\star (x) = q^\dagger(X)\pmb Q(X,X)^{-1}\pmb Q(X, x)  , 
    % \end{equation}
    % where we think of $\pmb Q(X, X)$ as an invertible operator acting 
    % on block-vectors. If we choose  $\pmb Q \equiv \pmb V$, the diagonal kernel 
    % above, then
    % problem \eqref{vector-valued-optimal-recovery} can be simplified 
    % by writing $q^\star(x) = ( q^\star_1(x), \dots, q^\star_d(x))$ 
    % where each component $q^\star_i$ is given by
    % \begin{equation}\label{component-wise-representer-formula}
    %     q^\star_i (x)= V(x, X)  V(X,X)^{-1} q^\dagger_i(X)
    %     = \argmin_{v \in \V} \| v\|_\V \quad \st \quad 
    %     v(X) = q^\dagger_i(X).
    % \end{equation}                                                                  
% \subsection{Representer theorems for transport}\label{subsec:representer-theorems}

\subsection{Error analysis}\label{subsec:error-analysis}
We dedicate this section to the proof of \Cref{thm:main}. 
We present a sequence of propositions and lemmas that are of independent 
interest and from which the proof of \Cref{thm:main} follows as a corollary.

Let $\Omega \subset \R^d$ be a bounded open set and
define the space 
\begin{equation*}
    \LL_0(\Omega; \R^d):= \left\{  f: \Omega \to \R^d \: \Big| \: \| f\|_{\LL(\Omega; \R^d)} <+\infty \quad \text{and} 
    \quad f(x) = 0 \quad 
    \forall x \in \Omega^c \right\},
\end{equation*}
 where 
$    \| f \|_{\LL(\Omega; \R^d)} := \sup_{x\in \Omega} |f(x)| 
    + \sup_{x,y \in \Omega} \frac{| f(x) - f(y) |}{|x - y|} $, that is, 
$\R^d$ valued functions on $\Omega$ that are 
bounded and Lipschitz, and vanish outside of $\Omega$. Here $| \cdot |$ denotes 
the Euclidean norm on $\R^d$.
% \bp{(What does $|f (x)|$ mean here since $f$ is vector valued? Is it a 2-norm, since we use that in \ref{prop:approximation-error-vector-field-on-S} for expanding the $\infty$-norm.)}
Further 
define $\VV := L^1( [0, 1]; \LL_0(\Omega; \R^d))$, the space of Bochner integrable 
vector fields on the (time) interval $[0,1]$,
taking values in $\LL_0(\Omega; \R^d)$, with the norm defined as  
$\| f\|_\VV = \int_0^1 \sup_{x \in \Omega} | v(t, x)|dt + \int_0^1 \sup_{x,y \in \Omega} \frac{| v(t, x) - v(t, y) |}{|x - y|} dt$.  Let us now define 
$\Gamma := [0,1] \times \Omega$ and consider a matrix-valued Mercer kernel 
 $\pmb{Q}: \Gamma \times \Gamma \to \R^{d \times d}$ with RKHS $\pmb{\mcl{Q}}$. The following lemma follows directly from classic results from ODE theory; see for example \cite[Thms.~C.3 and C.7]{younes2010shapes}:
%and 
% define the following generalization of problem \eqref{KODE-formulation-with-inducing-points-related}, by removing the restriction of the vector field 
% to the set of inducing points:
% \begin{equation}\label{KODE-formulation-idealized-restricted}
%     \left\{ 
%     \begin{aligned}
%     &\minimize_{ v \in \pmb{\Q}} && \MMD_K( \phi(\cdot, 1)\# \eta^N, \nu^N)  \\
%     & \st && \phi_t = v(\phi, t), \quad \phi(x, 0) = x,  \quad \| v \|_{\pmb\Q} \le r,
% \end{aligned}
%     \right.
% \end{equation}
% The feasibility of 

\begin{lemma}\label{lem:existence-uniqueness-homeomorphism}
    Suppose $\pmb{\mcl{Q}} \subseteq \VV$ and consider an ODE of the form 
    $\phi_t = v(t, \phi)$, with $\phi(0, x) = x$ for $v \in \pmb{\mcl{Q}}$. 
    Then (i) for all $x \in \Omega$
    there exists a unique solution $\phi(t, x)$ of the ODE on $[0,1]$ and (ii)
    the flow map associated with the ODE is at all times continuous, invertible, and 
    has a continuous inverse (i.e., a homeomorphism of $\Omega$). 
\end{lemma}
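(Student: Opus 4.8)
The plan is to verify that the inclusion $\pmb{\mcl{Q}} \subseteq \VV$ furnishes exactly the Carath\'eodory-type hypotheses required by the classical flow theorems cited from \cite{younes2010shapes}, and then to invoke those theorems. First I would unpack what membership in $\VV = L^1([0,1]; \LL_0(\Omega; \R^d))$ buys us for a fixed $v \in \pmb{\mcl{Q}}$: writing $m(t) := \sup_{x \in \Omega} |v(t,x)|$ and $\ell(t) := \sup_{x \ne y} |v(t,x) - v(t,y)| / |x-y|$, the finiteness of $\|v\|_\VV$ means that both $m$ and $\ell$ lie in $L^1([0,1])$. Thus for a.e. $t$ the field $v(t, \cdot)$ is globally Lipschitz on $\Omega$ with integrable-in-time Lipschitz constant, is bounded with integrable-in-time bound, and $t \mapsto v(t, x)$ is measurable for each fixed $x$. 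These are precisely the hypotheses of the Carath\'eodory/Picard--Lindel\"of existence--uniqueness theorem.

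For part (i), I would apply the contraction-mapping argument to the integral form $\phi(t,x) = x + \int_0^t v(s, \phi(s,x)) \, \dd s$, using the integrable Lipschitz bound $\ell$ together with Gr\"onwall's inequality to obtain a unique absolutely continuous solution on a maximal interval. Global existence on all of $[0,1]$ (no finite-time blow-up) follows from the a priori bound $|\phi(t,x) - x| \le \int_0^1 m(s)\,\dd s < +\infty$, which confines trajectories to a fixed bounded set. Crucially, since elements of $\LL_0(\Omega; \R^d)$ vanish on $\Omega^c$ and are Lipschitz (hence continuous), $v(t, \cdot)$ also vanishes on $\partial \Omega$; consequently boundary points are equilibria, and by uniqueness no trajectory starting in $\Omega$ can exit $\Omega$. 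This confirms that $\phi(t, \cdot)$ is a well-defined self-map of $\Omega$.

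For part (ii), the homeomorphism property I would obtain from the reversibility of the flow. Continuous, indeed Lipschitz, dependence on the initial condition is a direct Gr\"onwall estimate, $|\phi(t,x) - \phi(t,y)| \le |x - y| \exp\bigl( \int_0^t \ell(s)\,\dd s \bigr)$, which in particular yields injectivity of $\phi(t, \cdot)$. The inverse is constructed by integrating the ODE backward in time from $t$ to $0$; the same existence--uniqueness and Gr\"onwall arguments apply to the time-reversed field, producing a continuous backward flow that is a two-sided inverse of $\phi(t, \cdot)$ and hence establishes surjectivity onto $\Omega$. I expect the only genuinely delicate point to be the invariance of $\Omega$ under the flow, i.e., ruling out trajectories that reach the boundary, which is exactly where the vanishing-on-$\Omega^c$ structure of $\LL_0(\Omega; \R^d)$ is essential; everything else is a direct citation of \cite[Thms.~C.3 and C.7]{younes2010shapes} once these hypotheses have been checked.
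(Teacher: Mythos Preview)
Your proposal is correct and matches the paper's approach: the paper does not give a proof at all but simply states that the lemma ``follows directly from classic results from ODE theory; see for example \cite[Thms.~C.3 and C.7]{younes2010shapes}.'' Your write-up is just a more explicit unpacking of how the hypotheses of those cited theorems are verified from the definition of $\VV$, including the invariance of $\Omega$ via the vanishing condition in $\LL_0(\Omega;\R^d)$, so there is nothing to add or correct.
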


% \bh{
% Let us now 
% consider a set of inducing points $S = \{s_1, \dots, s_M\} \subset \Gamma$ as in 
% \eqref{KODE-formulation-with-inducing-points-related}, and
% define the following spaces of transport maps
% \begin{equation*}
%     \begin{aligned}
%     \T^{\pmb{\Q}} &:= \left\{ T:\Omega \to \Omega, \quad  T(x) = \phi(1, x), \quad \phi_t = v(t, \phi), 
%     \quad \phi(0, x) = x, \quad v \in \pmb{\Q}    \right\}, \\
%     \T_r^{\pmb{\Q}} &:= \left\{ T^{\pmb{\Q}} \in \T \mid \text{ where } 
%     \| v \|_{\pmb\Q} \le r \right\}, \\
%     \T_r^{\pmb{\Q}, M} &:= \left\{ T^{\pmb{\Q}} \in \T \mid \text{ where } 
%     v = \sum_{j=1}^M c_{j} \pmb Q(s_j, \cdot), 
%     \: C^T \pmb Q(S, S) C \le r, \: C = \{c_j\}_{j=1}^M \in \R^{d \times M}  \right\},  
% \end{aligned}
% \end{equation*}
% which are well-defined whenever $\pmb{\Q} \subset \VV$ by \Cref{lem:existence-uniqueness-homeomorphism}.
% }
% We now consider the transport problem 
% \begin{equation}\label{T-M-N-problem-def}
%     T^{M, N}_r := \argmin_{T \in \T_r^{\pmb{\Q}, M}}
% \end{equation}

Given a set of inducing points $S:= \{ s_1, \dots, s_M\} \subset \Gamma$ and 
a scalar $r > 0$
define the sets $\pmb \Q_r^S \subset \pmb \Q_r  \subset \pmb \Q$ 
as
\begin{equation*}
\begin{aligned}
    \pmb \Q_r & : = \left\{ v \in \pmb \Q  \: \Big| \: \| v \|_{\pmb \Q}^2 \le r^2 \right\}, \qquad
    \pmb \Q_r^S &:= \left\{ v = \sum_{j=1}^M c_j \pmb Q(s_j, \cdot) \: \:  \bigg|  \: \:
     C^T \pmb \Q(S, S) C \le r^2 \right\}.
    %, \quad C = \{c_j\}_{j=1}^M \in \R^{d \times M} \right\}
\end{aligned}
\end{equation*}
% \bp{(I have a general confusion her on the space-time kernel. We learn $c_i$ in our framework. They're the ones determining the time varying part. So, it'd be absorbed in the $Q$ here, I think. Comparing this framework to what we do, the $c_i$ are 1?)}
Let us now recall the setting of Problem~\eqref{KODE-formulation-with-inducing-points-related}
by considering a Mercer kernel $K: \Omega \times \Omega \to \R$ and
reference and target measures $\eta, \nu \in \PP(\Omega)$ with their $N$-sample 
empirical approximations $\eta^N, \nu^N$. We then define
\begin{subequations}\label{velocity-field-minimizers}
\begin{align}
v_r & :=
\left\{
\begin{aligned}
      & \argmin_{v \in \VV} &&\MMD_K ( \phi(1, \cdot) \# \eta, \nu), \\ 
      & \st &&  \phi_t  = v( t, \phi), \quad \phi(0, x) = x \: \forall x  \in \Omega,  \quad v \in \pmb \Q_r,
\end{aligned}
\right. \label{population-KODE-with-r} \\
    v^{S}_r & :=
\left\{
\begin{aligned}
      & \argmin_{v \in \VV} &&\MMD_K ( \phi(1, \cdot) \# \eta, \nu), \\ 
      & \st &&  \phi_t  = v( t, \phi), \quad \phi(0, x) = x \: \forall x  \in \Omega, \quad v \in \pmb \Q_r^S,
\end{aligned}
\right. \label{population-KODE-with-inducing-points} \\
v^{S,N}_r & :=
\left\{
\begin{aligned}
      & \argmin_{v \in \VV} &&\MMD_K ( \phi(1, \cdot) \# \eta^N, \nu^N), \\ 
      & \st &&  \phi_t  = v( t, \phi),  \quad \phi(0, x) = x \: \forall x  \in \Omega,
      \quad v \in \pmb \Q_r^S.
\end{aligned}
\right. \label{KODE-restricted-recalled}
\end{align}
\end{subequations}
noting that $v^{S,N}_r$ is precisely the velocity field that solves \eqref{KODE-formulation-with-inducing-points-related}. 
By \Cref{lem:existence-uniqueness-homeomorphism}
above, all of these problems are feasible since the ODEs for $\phi$ are 
well-defined and have unique solutions for the prescribed class of velocity fields. 
\begin{remark}
    We note that the problems in \eqref{velocity-field-minimizers} may have multiple minimizers
due to the fact that the loss functions are not convex even 
if we were to choose a minimum $\pmb \Q$ norm solution. For this reason, we simply
take $v_r, v^S_r,$ and $v^{S,N}_r$ to be any minimizer 
of these problems in the event of multiple minimizers moving forward.
\end{remark}

We now 
wish to show that the problems in \eqref{velocity-field-minimizers}  attain their minimizers under some mild assumptions
and so the velocity fields $v_r, v_r^S, v_r^{S,N}$ are well-defined. 
To this end, for a fixed $v \in \VV$ define the transport map 
\begin{equation*}
    T(x; v) := \phi(1, x), \quad \st \quad \phi_t = v( t, \phi), \quad \phi(0, x) = x.
\end{equation*}
We then have the following lemma as a consequence of classic perturbation results 
for ODEs with respect to the velocity field $v$ (see \cite[Thm.~4.7]{mattheij2002ordinary}),
stating that $T(x; v)$ is continuous in both of its arguments: 
% \bp{(Isn't this result about the stability of the flows based on IC and vector fields as opposed to continuity?)}
\begin{lemma}\label{lem:ODE-velocity-field-continuity}
    Suppose $v, v' \in \VV$ and let $L_v := \sup_{s,s' \in \Gamma} \frac{|v(s) - v(s')|}{|s  - s'|}$. Then it holds that 
    \begin{equation*}
        | T(x;v) - T(x'; v') | \le \exp(L_v) | x - x'| + \frac{(\exp( L_v) - 1)}{L_v} \| v - v'\|_\infty, 
    \end{equation*}
    where we introduced the notation $\| v - v' \|_{\infty} := \sup_{s \in \Gamma} | v - v' |$.
    % and  consequently 
    % \begin{equation*}
    %     \| T( \cdot; v) - T(\cdot; w) \|_\infty \le \exp(L_v) \; {\rm diam}(\Omega) 
    %     + \frac{(\exp( L_v) - 1)}{L_v} \| v - w\|_\infty. 
    % \end{equation*}
\end{lemma}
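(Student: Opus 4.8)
The plan is to reduce the statement to a Grönwall-type integral inequality for the pointwise error between the two flows and then solve that inequality sharply. Write $\phi(t) := \phi(t,x)$ for the flow of $v$ started at $x$ and $\psi(t) := \psi(t, x')$ for the flow of $v'$ started at $x'$, so that $T(x;v) = \phi(1)$ and $T(x';v') = \psi(1)$; by \Cref{lem:existence-uniqueness-homeomorphism} both trajectories exist, are unique, and remain in $\Omega$ for every $t \in [0,1]$, which is what will make the Lipschitz estimates below legitimate. Recasting the two ODEs as integral equations gives $\phi(t) = x + \int_0^t v(\tau, \phi(\tau))\, \dd\tau$ and $\psi(t) = x' + \int_0^t v'(\tau, \psi(\tau))\, \dd\tau$. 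Set $e(t) := |\phi(t) - \psi(t)|$; the goal is then to bound $e(1)$.

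Next I would subtract the two integral equations and insert the intermediate term $v(\tau, \psi(\tau))$, splitting the integrand as
$$v(\tau, \phi(\tau)) - v'(\tau, \psi(\tau)) = \big[v(\tau, \phi(\tau)) - v(\tau, \psi(\tau))\big] + \big[v(\tau, \psi(\tau)) - v'(\tau, \psi(\tau))\big].$$
The first bracket is controlled by the Lipschitz constant of $v$: since the two arguments share the same time coordinate, $|s-s'| = |\phi(\tau) - \psi(\tau)|$ in the definition of $L_v$, so this term is bounded by $L_v\, e(\tau)$. The second bracket is bounded by $\sup_{s \in \Gamma}|v(s) - v'(s)| = \|v - v'\|_\infty$. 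Applying the triangle inequality to the integral form together with these two bounds yields the affine-in-time integral inequality
$$e(t) \le |x - x'| + \|v - v'\|_\infty\, t + L_v \int_0^t e(\tau)\, \dd\tau.$$

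The final step is a careful application of Grönwall's inequality. To recover the sharp constant $\tfrac{\exp(L_v) - 1}{L_v}$ rather than the cruder $\exp(L_v)$, I would not simply bound the inhomogeneous term by its value at $t=1$; instead I would set $E(t) := \int_0^t e(\tau)\,\dd\tau$, rewrite the inequality as the differential inequality $E'(t) - L_v E(t) \le |x-x'| + \|v-v'\|_\infty\, t$, and integrate the integrating-factor identity $\big(e^{-L_v t} E(t)\big)' \le (|x-x'| + \|v-v'\|_\infty\, t)\,e^{-L_v t}$ from $0$ to $t$. Substituting the resulting bound on $E(t)$ back into the integral inequality and evaluating the elementary integrals collapses the right-hand side to $e(t) \le |x-x'|\,e^{L_v t} + \|v-v'\|_\infty\,\tfrac{e^{L_v t} - 1}{L_v}$; setting $t = 1$ gives the claim.

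The computation is entirely routine once the inequality is set up, so the one thing to get right is keeping the linear-in-time accumulation of the $\|v-v'\|_\infty$ contribution separate from its exponential amplification, which is precisely what produces the sharp factor $\tfrac{e^{L_v}-1}{L_v}$; a naive Grönwall step would lose this. For completeness I would also note two edge cases: if $L_v = +\infty$ both terms on the right-hand side are infinite and the bound is vacuous, while as $L_v \to 0^+$ the factor $\tfrac{e^{L_v}-1}{L_v} \to 1$ and the bound degenerates continuously to $e(1) \le |x-x'| + \|v-v'\|_\infty$, consistent with the constant-velocity case.
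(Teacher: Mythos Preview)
Your proof is correct and complete; the Gr\"onwall argument with the integrating-factor refinement recovers exactly the stated constants. The paper itself does not give a proof of this lemma but simply cites \cite[Thm.~4.7]{mattheij2002ordinary} as a classic ODE perturbation result, so you have filled in what the paper leaves to the reference, using precisely the standard method.
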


We also recall the following stability result for MMD  \cite[Thm.~3.2]{baptista2023approximation}:
\begin{lemma}\label{lem:MMD-stability}
    Suppose $K: \Omega \times \Omega \to \R$ is a stationary Mercer kernel 
    such that 
    \begin{equation}\label{kappa-lipschitz-at-zero}
       \sup_{x,x'\in \Omega} \frac{ \|  K(x, \cdot) - K(x', \cdot) \|_{\K}}{|x - x'|} \le L_K  
    \end{equation}
    with a constant $L_K > 0$.    
    % such that $K(x,x') = \kappa( | x - x'|)$ for a function $\kappa: \R_{\ge 0} \to \R_{\ge 0}$ 
    % satisfying 
    % \begin{equation}\label{kappa-lipschitz-at-zero}
    %     \kappa(0) - \kappa(t) \le \frac{1}{2} L^2_\kappa  t^2 , \qquad \forall t \ge 0,
    % \end{equation}
    % with a constant $L_\kappa > 0$. 
    For a measure $\eta \in \PP(\Omega)$ and
    exponent $p \in [1, +\infty]$
   consider the space $L^\infty_\eta( \Omega; \Omega):= 
    \{ T: \Omega \to \Omega \mid |  \| T \|_{L^\infty_\eta(\Omega; \Omega)} < + \infty \}$ 
    \footnote{This result can also be stated for maps in $L^p_\eta(\Omega;\Omega)$, but 
    $L^\infty$ is natural here since our maps are continuous.}
    with norm $\| T \|_{L^\infty_\eta(\Omega; \Omega)} := \esssup_{x \in \text{supp }(\eta)} |T(x)| $. 
    % \bp{(In the original paper, the assumption is Locally Lipschitz kernel. Here, we haveglobal lipschitz since we have $L_k$ instead of $L_k
    % (\Gamma, \Gamma')$. Should that be stated explicitly?)}
    Then it holds, for any pair of maps $T, T' \in L_\eta^\infty(\Omega; \Omega)$, that 
    \begin{equation*}
          \MMD_K( T\# \eta,  T'\#  \eta) 
          \le L_K \| T - T' \|_{L^\infty_\eta(\Omega; \Omega)}. 
    \end{equation*}
    % where  $p,q \in [1, +\infty]$ are H\"older exponents satisfying $1/p + 1/q = 1$.
 \end{lemma}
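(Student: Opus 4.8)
The plan is to use the defining isometry between the MMD and the $\K$-norm of kernel mean embeddings, and then reduce everything to the Lipschitz-at-zero condition \eqref{kappa-lipschitz-at-zero} by pulling the norm inside an integral. First I would check well-definedness: since $\Omega$ is bounded and $K$ is Mercer (hence continuous on the compact closure), $\sup_{x\in\Omega}\sqrt{K(x,x)}<+\infty$, so both $T\#\eta$ and $T'\#\eta$ lie in $\PP_K(\Omega)$ and the embeddings $I_K(T\#\eta)$, $I_K(T'\#\eta)$ are genuine elements of $\K$. The definition \eqref{def:MMD} then gives
\[
  \MMD_K(T\#\eta,\,T'\#\eta)=\big\|\,I_K(T\#\eta)-I_K(T'\#\eta)\,\big\|_\K .
\]

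Next I would rewrite each embedding via the change-of-variables formula for pushforwards, $I_K(T\#\eta)=\int_\Omega K(x,\cdot)\,(T\#\eta)(\dd x)=\int_\Omega K(T(x),\cdot)\,\eta(\dd x)$, and identically for $T'$. Subtracting collapses the difference into a single $\K$-valued (Bochner) integral,
\[
  I_K(T\#\eta)-I_K(T'\#\eta)=\int_\Omega\big(K(T(x),\cdot)-K(T'(x),\cdot)\big)\,\eta(\dd x).
\]
The crux of the argument is then to move the $\K$-norm inside this integral via Minkowski's integral inequality (the triangle inequality for Bochner integrals), giving
\[
  \big\|\,I_K(T\#\eta)-I_K(T'\#\eta)\,\big\|_\K
  \le\int_\Omega\big\|K(T(x),\cdot)-K(T'(x),\cdot)\big\|_\K\,\eta(\dd x).
\]

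To finish, I would apply \eqref{kappa-lipschitz-at-zero} pointwise, with the roles of $x,x'$ played by $T(x),T'(x)$, to bound the integrand by $L_K\,|T(x)-T'(x)|$, and then dominate the integral by the essential supremum over $\supp(\eta)$:
\[
  \MMD_K(T\#\eta,\,T'\#\eta)\le L_K\int_\Omega|T(x)-T'(x)|\,\eta(\dd x)
  \le L_K\,\esssup_{x\in\supp(\eta)}|T(x)-T'(x)|
  = L_K\,\|T-T'\|_{L^\infty_\eta(\Omega;\Omega)}.
\]
The only real obstacle is the technical justification of the Minkowski step: one must verify that $x\mapsto K(T(x),\cdot)$ is strongly measurable and Bochner integrable as a $\K$-valued map, so that the inequality is legitimate. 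This follows from the continuity of $K$, the measurability of $T$, and the uniform bound $\sup_{x\in\Omega}\|K(x,\cdot)\|_\K=\sup_{x\in\Omega}\sqrt{K(x,x)}<+\infty$ on the bounded set $\Omega$; everything else is a routine application of the triangle and Lipschitz bounds. I would note that stationarity of $K$ is not essential to this chain of inequalities—only \eqref{kappa-lipschitz-at-zero} and the integrability are used—so the statement holds verbatim for any Mercer kernel satisfying the Lipschitz-at-zero condition.
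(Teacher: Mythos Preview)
Your argument is correct. The paper itself does not give a proof of this lemma; it simply cites it as \cite[Thm.~3.2]{baptista2023approximation} and uses it as a black box. Your proposal supplies a clean self-contained proof via the kernel mean embedding isometry, change of variables for pushforwards, the Bochner triangle inequality, and a pointwise application of \eqref{kappa-lipschitz-at-zero}. This is essentially the standard route to such stability estimates, and your remark that stationarity is not actually used is accurate: only the Lipschitz condition on the feature map and the boundedness of $K$ on $\Omega$ enter the chain of inequalities.
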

% \bp{(Probably stupid question but if $\eta \in \PP({\Omega})$ then isn't $\eta(\Omega) = 1$ by definition of a probability measure?)}
With the above lemmas at  hand, we can now prove the continuous dependence of the MMD 
loss function in \eqref{velocity-field-minimizers} on the underlying velocity fields.

\begin{proposition}\label{prop:MMD-K-depends-continuously-on-data}
    Suppose $v, v' \in \VV$ and $K$ is a Mercer kernel satisfying 
    \eqref{kappa-lipschitz-at-zero} with a constant $L_K > 0$. 
    Then for pairs of measures $\eta, \nu \in \PP(\Omega)$
    it holds that 
    \begin{equation*}
        | \MMD_K( T(\cdot; v) \# \eta, \nu) -  \MMD_K( T(\cdot; v') \# \eta, \nu) | 
        \le \frac{L_K (\exp( L_v) -1)}{L_v} \| v - v'\|_\infty.
    \end{equation*}
\end{proposition}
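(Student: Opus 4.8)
The plan is to reduce the statement to the two preceding lemmas by way of the reverse triangle inequality for the MMD. Since $\MMD_K(\rho_1,\rho_2) = \| I_K(\rho_1) - I_K(\rho_2)\|_\K$ is the RKHS norm of a difference of kernel mean embeddings, it is a pseudometric on $\PP_K(\Omega)$ and in particular obeys the triangle inequality. Applying this to the three measures $T(\cdot;v)\#\eta$, $T(\cdot;v')\#\eta$, and $\nu$ yields
\begin{equation*}
| \MMD_K( T(\cdot; v) \# \eta, \nu) -  \MMD_K( T(\cdot; v') \# \eta, \nu) | \le \MMD_K( T(\cdot; v) \# \eta, T(\cdot; v') \# \eta),
\end{equation*}
so it suffices to bound the right-hand side.

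For this I would invoke \Cref{lem:MMD-stability} with the two maps $T(\cdot;v)$ and $T(\cdot;v')$, which push forward the \emph{same} reference $\eta$, giving
\begin{equation*}
\MMD_K( T(\cdot; v) \# \eta, T(\cdot; v') \# \eta) \le L_K \| T(\cdot; v) - T(\cdot; v') \|_{L^\infty_\eta(\Omega; \Omega)}.
\end{equation*}
Here I should check that the hypotheses of \Cref{lem:MMD-stability} hold: the Lipschitz condition \eqref{kappa-lipschitz-at-zero} on $K$ is assumed in the proposition, and both transport maps lie in $L^\infty_\eta(\Omega;\Omega)$ since, by \Cref{lem:existence-uniqueness-homeomorphism}, they are homeomorphisms of the bounded set $\Omega$ and hence bounded.

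The final step is to control the $L^\infty_\eta$ distance between the two transport maps via \Cref{lem:ODE-velocity-field-continuity}. The key observation is that both flows start from the \emph{same} initial condition, so I would apply that lemma with $x' = x$; the term $\exp(L_v)|x - x'|$ then vanishes, leaving the pointwise estimate $| T(x; v) - T(x; v') | \le \frac{\exp(L_v) - 1}{L_v} \| v - v'\|_\infty$, uniform in $x \in \Omega$. Taking the essential supremum over the support of $\eta$ preserves this uniform bound, and chaining the three displays produces exactly the claimed inequality. I do not expect a genuine obstacle: the argument is a clean composition of the two lemmas, and the only points needing a moment's care are justifying the reverse triangle inequality for $\MMD_K$ (immediate from its being a seminorm-induced distance) and recognizing that the specialization $x' = x$ in \Cref{lem:ODE-velocity-field-continuity} is precisely what isolates the velocity-field dependence.
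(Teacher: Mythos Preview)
Your proof is correct and follows essentially the same route as the paper: the reverse triangle inequality for $\MMD_K$, then \Cref{lem:MMD-stability}, then \Cref{lem:ODE-velocity-field-continuity} specialized to $x=x'$. If anything, you are slightly more careful in verifying the hypotheses of the intermediate lemmas.
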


\begin{proof}
    Since $\MMD_K$ satisfies the triangle inequality and is symmetric, we have that 
    % $\MMD_K( T(\cdot; v) \# \eta, \nu) \le \MMD_K( T(\cdot; v) \# \eta, T(\cdot; v') \# \eta) + 
    % \MMD(T(\cdot; v') \# \eta, \nu)$
    % and  $\MMD_K( T(\cdot; v') \# \eta, \nu) \le \MMD_K( T(\cdot; v) \# \eta, T(\cdot; v') \# \eta) + 
    % \MMD(T(\cdot; v) \# \eta, \nu)$. Therefore 
    \begin{equation*}
        | \MMD_K( T(\cdot; v) \# \eta, \nu) -  \MMD_K( T(\cdot; v') \# \eta, \nu) | 
        \le \MMD_K( T(\cdot; v) \# \eta, T(\cdot; v') \# \eta ).
    \end{equation*}
    Then applying \Cref{lem:MMD-stability}  followed by 
    \Cref{lem:ODE-velocity-field-continuity} with $x = x'$ gives the result.
    {}
\end{proof}

As a result of the above proposition, we can now establish that 
under some regularity assumptions, the problems in  \eqref{velocity-field-minimizers}
have feasible minimizers. 
\begin{proposition}\label{prop:KODE-existence-of-minimizers}
Suppose  $K: \Omega \times \Omega \to \R$ is a stationary Mercer kernel satisfying 
    \eqref{kappa-lipschitz-at-zero}, let 
    $\eta, \nu \in \PP_K(\Omega)$, and assume $\pmb \Q$ is compactly embedded 
    in $\VV$, i.e., $\pmb \Q$ is a compact subset of $\VV$ and  
    there exists a constant $C_{\pmb \Q}> 0$ so that $\| v \|_\VV 
    \le C_{\pmb \Q} \| v \|_{\pmb \Q} $ for all $v \in \pmb \Q$. Then the minimization problems 
    in \eqref{velocity-field-minimizers} have feasible minimizers. 
\end{proposition}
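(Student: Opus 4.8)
The plan is to establish existence of minimizers via the direct method in the calculus of variations. For each of the three problems in \eqref{velocity-field-minimizers}, the feasible set is $\pmb\Q_r$ or $\pmb\Q_r^S$, and the objective is the MMD loss composed with the flow map. I would first argue that the feasible sets are compact in the topology of $\VV$, then show that the objective functional $v \mapsto \MMD_K(T(\cdot;v)\#\eta,\nu)$ is continuous with respect to that topology, and finally invoke the Weierstrass extreme value theorem: a continuous function on a compact set attains its infimum.

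\textbf{Compactness of the feasible sets.} For the constraint $\|v\|_{\pmb\Q}^2 \le r^2$, the set $\pmb\Q_r$ is a closed ball in $\pmb\Q$. By hypothesis $\pmb\Q$ is compactly embedded in $\VV$, so the closed ball $\pmb\Q_r$ is a compact subset of $\VV$. For the inducing-point set $\pmb\Q_r^S$, I would note that it is a closed, bounded subset of the finite-dimensional subspace $\mathrm{span}\{\pmb Q(s_j,\cdot)\}_{j=1}^M$ of $\pmb\Q$, and since $\pmb\Q_r^S \subset \pmb\Q_r$, it inherits compactness in $\VV$ (a closed subset of a compact set is compact, once I verify $\pmb\Q_r^S$ is closed in $\VV$, which follows because the coefficient-to-function map is continuous and the constraint $C^T\pmb Q(S,S)C \le r^2$ is closed).

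\textbf{Continuity of the objective.} This is where \Cref{prop:MMD-K-depends-continuously-on-data} does the heavy lifting. That proposition gives
\begin{equation*}
    |\MMD_K(T(\cdot;v)\#\eta,\nu) - \MMD_K(T(\cdot;v')\#\eta,\nu)| \le \frac{L_K(\exp(L_v)-1)}{L_v}\|v-v'\|_\infty,
\end{equation*}
so the objective is Lipschitz in the $\|\cdot\|_\infty$ sense, modulo the factor involving $L_v$. Since $\|v-v'\|_\infty \le \|v-v'\|_\VV$ and $L_v$ is controlled uniformly on $\pmb\Q_r$ (the Lipschitz seminorm is dominated by $\|v\|_\VV \le C_{\pmb\Q}\|v\|_{\pmb\Q} \le C_{\pmb\Q}r$), the prefactor is bounded by a constant depending only on $r$, $L_K$, and $C_{\pmb\Q}$ on the feasible set. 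Hence the objective is continuous (indeed Lipschitz) on each feasible set with respect to the $\VV$ topology. The same reasoning applies verbatim to the empirical problem \eqref{KODE-restricted-recalled}, since replacing $\eta,\nu$ by $\eta^N,\nu^N$ only changes the measures, not the structure of the bound, and the empirical measures lie in $\PP_K(\Omega)$ because $\Omega$ is bounded.

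\textbf{The main obstacle} I anticipate is the interplay between the prefactor $(\exp(L_v)-1)/L_v$ in \Cref{prop:MMD-K-depends-continuously-on-data} and the need for \emph{uniform} continuity over the feasible set: the Lipschitz constant of the objective depends on $L_v$, which varies with $v$. The clean fix is to observe that on $\pmb\Q_r$ the quantity $L_v$ is uniformly bounded, so one can extract a single Lipschitz constant valid over the whole ball; the map $x\mapsto (\exp(x)-1)/x$ is increasing and continuous, so bounding $L_v \le C_{\pmb\Q}r$ gives a uniform bound on the prefactor. With uniform Lipschitz continuity and compactness established, existence follows immediately from the extreme value theorem. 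A minor point worth checking is that the feasible sets are nonempty, which is clear since $v\equiv 0 \in \pmb\Q_r^S \subset \pmb\Q_r$ for every $r>0$.
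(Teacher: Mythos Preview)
Your proposal is correct and follows essentially the same approach as the paper: feasibility via the zero vector field, compactness of $\pmb\Q_r$ and $\pmb\Q_r^S$ in $\VV$ from the compact embedding hypothesis, continuity of the objective from \Cref{prop:MMD-K-depends-continuously-on-data}, and then the extreme value theorem. Your write-up is in fact more detailed than the paper's, which simply cites \Cref{prop:MMD-K-depends-continuously-on-data} together with compactness and defers to a standard existence result; your explicit handling of the uniform bound on the prefactor $(\exp(L_v)-1)/L_v$ over $\pmb\Q_r$ is a useful clarification.
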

% \bp{(Is it true that if $Q$ is compactly embedded then so are $Q_r$ and $Q_r^S$? Boundedness and the norm property should hold but I cannot tell if closure necessarily holds? If not, we have to explicitly mention that yeah? Cause we do use their compact embedding in the proof.)}
\begin{proof}
    Since $\eta, \nu \in \PP_K(\Omega)$ then $\MMD_K( \eta, \nu) < +\infty$ but 
    $\eta = T( \cdot; 0) \# \eta $, i.e., the velocity field arising from the 
    zero vector field. Since $0$ is an element of both spaces $\pmb \Q_r$ and $\pmb \Q_r^S $
    then all three problems are feasible. The same argument also applies to $\eta^N, \nu^N$, 
    in fact, in this case it always holds that $\eta^N, \nu^N \in \PP_K(\Omega)$.
    Then \Cref{prop:MMD-K-depends-continuously-on-data} and the assumption that $\pmb \Q_r$ (and similarly $\pmb \Q_r^S$) are 
    compact subsets of $\VV$ give the existence of minimizers in \eqref{velocity-field-minimizers}; see for example \cite[Thm.~1.9]{rockafellar2005variational}.
\end{proof}

% \bp{(In the first part, we show that there's a solution in the domain set that achieves a finite MMD. Then, Prop 2.5 shows that MMD is continuous. We assume $Q$ is compact subset of $\VV$. Then, we use the fact that a continuous function over a compact set attains a minimum. Is that reasoning correct? I'm just a little confused why you need compact embedding as opposed to just being a compact subset. Maybe as of now, we don't use it but later we do the compact embedding part?)}

We now turn our attention to controlling the error of the transport map arising 
from the velocity field $v^{S,N}_r$, which is an object that we can compute 
in practice. This velocity field is random due to its dependence on the 
empirical measures $\eta^N, \nu^N$. Therefore, we aim to obtain a high-probability 
bound on the quantity
$\MMD_K( T(\cdot; v^{S,N}_r) \# \eta, \nu)$. 
% Since $\MMD_K$ satisfies the 
% triangle inequality we have that 
% \begin{equation}\label{MMD-error-bound-triangle-inequality}
% \begin{aligned}
%         \MMD_K( T(\cdot; v^{S,N}_r) \# \eta, \nu)
%      \le & \MMD_K( T(\cdot; v^{S,N}_r) \# \eta, T(\cdot; v^{S}_r) \# \eta) =: M_1\\ 
%     & + \MMD_K( T(\cdot; v^{S}_r) \# \eta, T(\cdot; v_r) \# \eta) =: M_2 \\
%     & + \MMD_K( T(\cdot; v_r) \# \eta, \nu) =: M_3. 
%     \end{aligned}
% \end{equation}
% The first term involves the random velocity field $v^{S,N}_r$ and will lead to 
% a rate that is dependent on the number of samples $N$, essentially the sample complexity 
% of the problem. The second and third terms are no longer random, the first term 
% represent the error of approximating $v_r$ with a velocity field supported on 
% the inducing points $S$ while the third term represents the overall bias 
% of restricting our velocity fields to the ball $\pmb \Q_r$. Of course, if 
% there exists a velocity field $v^\dagger \in \pmb \Q$ such that 
% $T(\cdot ; v^\dagger) \# \eta = \nu$ then we can simply take 
% $r = \| v^\dagger \|_{\pmb \Q}$ in which case the third term will vanish. 
To achieve such a bound, we rely on \cite[Thm.~1]{briol2019statistical}, 
which gives a generalization bound for minimum $\MMD$ generative models. 
We state that result as a lemma below for convenience:
\begin{lemma}\label{lem:duncan-generalization-bound}
    Let $K: \Omega \times \Omega \to \R$ be a bounded Mercer kernel and 
    let $\mu \in \PP_K(\Omega)$. Consider a generative model (a parametric 
    probability measure) $\mu_\theta \in \PP_K(\Omega)$ parameterized 
    by $\theta \in \Theta$, taken to be an arbitrary Banach space.
    Suppose it holds that 
    \begin{enumerate}[label=(\roman*)]
        \item For every $\mu \in \PP_K(\Omega)$, there exists $C > 0$ 
        such that the set $\{ \theta \in \Theta  \mid \MMD_K( \mu_\theta, \mu) 
        \le \inf_{\theta \in \Theta} \MMD_K( \mu_\theta, \mu) + C\} $ is bounded. 

        \item For every $N >0$ and $\mu \in \PP_K(\Omega)$, there exists $C_N > 0$ 
        such that the set $\{ \theta \in \Theta  \mid \MMD_K( \mu^N_\theta, \mu) 
        \le \inf_{\theta \in \Theta} \MMD_K( \mu_\theta, \mu) + C_N\} $ is almost 
        surely bounded,
        where $\mu_\theta^N$ is an empirical approximation to $\mu_\theta$.
    \end{enumerate}
    Define $\theta^N : = \argmin_{\theta \in \Theta} \MMD_K( \mu_\theta, \mu^N)$.
    Then, with probability at least $1- \delta$ we have 
    \begin{equation*}
        \MMD_K ( \mu_{\theta^N}, \mu) \le 
        \inf_{\theta \in \Theta} \MMD_K ( \mu_\theta, \mu) 
        + 2 \sqrt{\frac2N \sup_{x \in \Omega} K(x, x)} \left( 2 + \sqrt{\log \left( \frac1\delta \right) } \right). 
    \end{equation*}
\end{lemma}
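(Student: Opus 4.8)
The plan is to exploit the fact that $\MMD_K$ is a genuine (pseudo)metric on $\PP_K(\Omega)$ --- being the $\K$-norm distance between kernel mean embeddings --- so that the whole statement collapses to a single concentration bound for $\MMD_K(\mu, \mu^N)$. I read conditions (i) and (ii) as coercivity/boundedness hypotheses whose sole purpose is to guarantee (following \cite{briol2019statistical}) that the minimizer $\theta^N$ exists and is well-defined; I would invoke them once to assert that the $\argmin$ is attained and that the relevant maps are measurable, and otherwise not use them in the quantitative estimate.

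First I would chain two triangle inequalities with the defining optimality of $\theta^N$. Since $\theta^N$ minimizes $\MMD_K(\mu_\theta, \mu^N)$, for every $\theta \in \Theta$ we have $\MMD_K(\mu_{\theta^N}, \mu^N) \le \MMD_K(\mu_\theta, \mu^N) \le \MMD_K(\mu_\theta, \mu) + \MMD_K(\mu, \mu^N)$, and taking the infimum over $\theta$ gives $\MMD_K(\mu_{\theta^N}, \mu^N) \le \inf_\theta \MMD_K(\mu_\theta, \mu) + \MMD_K(\mu, \mu^N)$. Combining this with $\MMD_K(\mu_{\theta^N}, \mu) \le \MMD_K(\mu_{\theta^N}, \mu^N) + \MMD_K(\mu^N, \mu)$ and the symmetry of $\MMD_K$ yields
\[ \MMD_K(\mu_{\theta^N}, \mu) \le \inf_\theta \MMD_K(\mu_\theta, \mu) + 2\,\MMD_K(\mu, \mu^N). \]
This isolates the only stochastic object and reduces the task to showing $\MMD_K(\mu, \mu^N) \le \sqrt{\frac{2B}{N}}\big(2 + \sqrt{\log(1/\delta)}\big)$ with probability $1-\delta$, where $B := \sup_{x \in \Omega} K(x,x)$.

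The genuinely probabilistic step is to control $\MMD_K(\mu, \mu^N) = \|I_K(\mu) - I_K(\mu^N)\|_\K$ in two stages. Writing $I_K(\mu^N) - I_K(\mu) = \frac1N \sum_{i=1}^N \xi_i$ with $\xi_i := K(X_i, \cdot) - I_K(\mu)$ a centered i.i.d. family in $\K$ with $\E\|\xi_i\|_\K^2 \le \E[K(X_i,X_i)] \le B$, the Hilbert-space variance identity gives $\E\|\frac1N\sum_i \xi_i\|_\K^2 \le B/N$, so by Jensen $\E[\MMD_K(\mu,\mu^N)] \le \sqrt{B/N}$. Then I would apply McDiarmid's bounded-difference inequality to the map $(x_1,\dots,x_N) \mapsto \MMD_K(\mu, \frac1N\sum_i \delta_{x_i})$; replacing a single $x_j$ changes its value by at most $\frac1N\|K(x_j,\cdot) - K(x_j',\cdot)\|_\K \le \frac{2\sqrt B}{N}$, so McDiarmid gives $\MMD_K(\mu,\mu^N) \le \sqrt{B/N} + \sqrt{\frac{2B}{N}\log(1/\delta)}$ with probability $1-\delta$. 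Since $\sqrt{B/N} \le 2\sqrt{2B/N}$, this is dominated by $\sqrt{\frac{2B}{N}}(2 + \sqrt{\log(1/\delta)})$; substituting into the displayed reduction and accounting for the factor $2\,\MMD_K(\mu,\mu^N)$ produces the stated constant $2\sqrt{\frac2N B}(2 + \sqrt{\log(1/\delta)})$.

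The main obstacle is not the algebra but the bookkeeping around well-definedness: one must verify via conditions (i)--(ii) that $\theta^N$ exists (the infimum is attained on a bounded set) and that the function whose concentration we control is measurable, so that McDiarmid legitimately applies. The quantitative heart --- the expectation bound and the bounded-difference constant --- is entirely standard once $B$ is finite, and the generous leading constants leave ample slack to absorb the $\sqrt{B/N}$ mean term into the final expression.
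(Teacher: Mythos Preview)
Your argument is correct and is precisely the standard proof: the triangle-inequality reduction to $2\,\MMD_K(\mu,\mu^N)$, then the Hilbert-space variance bound on the mean combined with McDiarmid via the bounded-difference constant $2\sqrt{B}/N$. The paper does not give its own proof of this lemma but quotes it as \cite[Thm.~1]{briol2019statistical} (noting only that the original's finite-dimensional hypothesis on $\Theta$ is inessential), and what you have written is exactly the argument found there.
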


\begin{remark}
Note that in the original article \cite{briol2019statistical}, this theorem is stated 
for $\Theta$ being a finite-dimensional Euclidean space, however an inspection of the 
proof  reveals that this assumption is not needed and therefore, we 
state the result assuming $\Theta$ is a Banach space.
\end{remark}

We apply \Cref{lem:duncan-generalization-bound} with 
$\mu \leftarrow \nu$, and $\mu_\theta \leftarrow T(\cdot; v) \# \eta$ 
for $v \in \pmb \Q^S_r$, i.e., our generative models are parameterized by 
the velocity fields $v$ and so $\Theta \leftarrow \pmb \Q^S_r$. 
\begin{proposition}\label{prop:bound-on-M1}
    Suppose $K: \Omega \times \Omega \to \R$ is a Mercer kernel that 
    satisfies \Cref{lem:MMD-stability}, $\eta, \nu \in \PP_K(\Omega)$, 
    and $\pmb \Q$ is compactly embedded in $\VV$. Then with probability at least 
    $1 -\delta$, for $\delta >0$, it holds that 
    \begin{equation}\label{KODE-generalization-bound-1}
    \begin{aligned}
        \MMD_K( T(\cdot; v^{S,N}_r) \# \eta, \nu) 
        \le  & \MMD_K( T(\cdot; v^S_r) \# \eta, \nu) \\ 
        & + 
        2 \sqrt{\frac2N \sup_{x \in \Omega} K(x, x)} \left( 2 + \sqrt{\log \left( \frac1\delta \right) } \right).
    \end{aligned}
    \end{equation}
\end{proposition}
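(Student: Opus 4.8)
The plan is to obtain this proposition as a direct corollary of the Briol--Barp--Duncan--Girolami generalization bound, \Cref{lem:duncan-generalization-bound}, specialized to the generative family induced by the kernel ODE. Following the substitution described immediately before the statement, I would set $\mu \leftarrow \nu$, take the model family to be $\mu_\theta \leftarrow T(\cdot; v)\#\eta$ indexed by $\theta = v$, fix the parameter set $\Theta \leftarrow \pmb\Q_r^S$, and use empirical target $\mu^N \leftarrow \nu^N$. With these identifications the conclusion of the lemma reads
\[
\MMD_K(T(\cdot;\theta^N)\#\eta, \nu) \le \inf_{v\in\pmb\Q_r^S}\MMD_K(T(\cdot;v)\#\eta, \nu) + 2\sqrt{\tfrac2N\sup_{x\in\Omega}K(x,x)}\bigl(2 + \sqrt{\log(1/\delta)}\bigr),
\]
so the substance of the proof is to check the hypotheses and to match both the estimator $\theta^N$ and the infimum on the right to the objects $v^{S,N}_r$ and $v^S_r$ in the proposition.

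Verifying the hypotheses should be routine given the standing assumptions. Boundedness of $K$, i.e.\ $\sup_{x\in\Omega} K(x,x) < \infty$, follows because $K$ is assumed stationary (as required by \Cref{lem:MMD-stability}), so $x \mapsto K(x,x)$ is constant and finite. For the sublevel-set conditions (i) and (ii), I would exploit that the feasible set $\pmb\Q_r^S$ is itself bounded in $\VV$: every $v \in \pmb\Q_r^S$ satisfies $\|v\|_{\pmb\Q}^2 = C^T\pmb{Q}(S,S)C \le r^2$, and the compact embedding $\pmb\Q \hookrightarrow \VV$ gives $\|v\|_\VV \le C_{\pmb\Q}\, r$. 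Since any sublevel set of $\MMD_K$ intersected with $\Theta = \pmb\Q_r^S$ is a subset of this bounded set, both conditions hold automatically, and \Cref{prop:KODE-existence-of-minimizers} already guarantees that the relevant minimizers exist so that $\theta^N$ and the infimizer are well-defined.

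The identification of the right-hand infimum is immediate: by definition \eqref{population-KODE-with-inducing-points} the minimizer of $\MMD_K(T(\cdot;v)\#\eta,\nu)$ over $v\in\pmb\Q_r^S$ is $v^S_r$, so $\inf_{v\in\pmb\Q_r^S}\MMD_K(T(\cdot;v)\#\eta,\nu) = \MMD_K(T(\cdot;v^S_r)\#\eta,\nu)$, which is the first term on the right of \eqref{KODE-generalization-bound-1}. The more delicate identification is that of $\theta^N$ with the computable estimator $v^{S,N}_r$. Here I would use that pushing the empirical reference through the flow yields $T(\cdot;v)\#\eta^N = (T(\cdot;v)\#\eta)^N$ — the same samples $x_i\sim\eta$ defining $\eta^N$ are transported by $T$ — so $T(\cdot;v)\#\eta^N$ is exactly the $N$-sample empirical approximation $\mu_\theta^N$ of the model measure $\mu_\theta = T(\cdot;v)\#\eta$. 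Thus $v^{S,N}_r$, which minimizes the doubly empirical objective $\MMD_K(T(\cdot;v)\#\eta^N,\nu^N)$, is precisely the empirical-model minimizer that the lemma's condition (ii), phrased through $\mu_\theta^N$, is meant to accommodate.

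I expect the main obstacle to be exactly this last reconciliation: the lemma phrases its estimator through the population model measure $\mu_\theta$, whereas $v^{S,N}_r$ is defined by replacing \emph{both} $\eta$ and $\nu$ by samples. The key is to argue that, once $T\#\eta^N$ is recognized as the canonical empirical surrogate $\mu_\theta^N$ appearing in hypothesis (ii), the generalization gap is still controlled by the single data-side concentration of $\MMD_K(\nu^N,\nu)$, which is what produces the $N^{-1/2}\bigl(2+\sqrt{\log(1/\delta)}\bigr)$ factor; the remaining steps are bookkeeping, resting only on stationarity of $K$ and the boundedness of $\pmb\Q_r^S$ established above.
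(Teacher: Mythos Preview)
Your proposal is correct and follows essentially the same route as the paper: apply \Cref{lem:duncan-generalization-bound} with the substitutions $\mu\leftarrow\nu$, $\Theta\leftarrow\pmb\Q_r^S$, verify boundedness of $K$ and of the parameter set to get conditions (i)--(ii), and invoke \Cref{prop:KODE-existence-of-minimizers} to identify the infimum with $\MMD_K(T(\cdot;v_r^S)\#\eta,\nu)$. Your discussion of the doubly-empirical issue---that $v_r^{S,N}$ minimizes over $\eta^N$ and $\nu^N$ simultaneously, and that $T(\cdot;v)\#\eta^N$ must be read as the empirical surrogate $\mu_\theta^N$---is in fact more careful than the paper's own proof, which simply writes ``applying that lemma \dots gives the result'' without commenting on this point.
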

\begin{proof}
Since $\Omega$ is assumed to be bounded 
then the Lipschitz assumption on $\kappa$ implies that $K$ is bounded as well.
Further, since $\pmb \Q_r$
is readily bounded, then conditions (i, ii) of \Cref{lem:duncan-generalization-bound}
are  satisfied for our setup.  
% \bp{(The boundedness of $Q_r$ comes from the fact that $Q$ is compact and itself bounded right?)}
Applying that lemma together with
the fact that by \Cref{prop:KODE-existence-of-minimizers}
we have $\inf_{v \in \pmb \Q^S_r} \MMD_K ( T(\cdot; v) \# \eta, \nu) 
= \MMD_K ( T(\cdot; v^S_r) \# \eta, \nu)$ gives the result.
\end{proof}

We now turn our attention to the first term in \eqref{KODE-generalization-bound-1}, which 
will reflect the approximation power of the class $\pmb \Q^S_r$. 
We will bound this term under stronger smoothness assumptions on the space $\pmb \Q$
and, in particular, focus on the case of diagonal kernels.

\begin{proposition}\label{prop:approximation-error-vector-field-on-S}
    Suppose $K: \Omega \times \Omega \to \R$ is a  Mercer kernel
    that
    satisfies
    \Cref{lem:MMD-stability} and let $\eta, \nu \in \PP_K(\Omega)$. Furthermore, let $V: \Gamma \times \Gamma \to \R$
    be another Mercer kernel
    such that $V \in C^{2k}(\Gamma \times \Gamma)$ for some $k \ge 1$ with $\V$ as its RKHS.
    Let $\pmb Q(s, s') = V(s, s') I$ be the corresponding 
    diagonal, matrix-valued kernel defined from $V$ and take 
    $\pmb \Q$ to be its RKHS
    which is assumed to be compactly embedded in $\VV$. Suppose $S = \{ s_1, \dots, s_M \} \subset \Gamma$ is a 
    collection of distinct points with fill distance 
       $ h_{S} := \sup_{s \in \Gamma} \inf_{s' \in S} \| s - s' \|_2.$
    Then there exists $h_0 >0$ such that for $h_S \le h_0$ 
    it holds that 
    \begin{equation*}
        \MMD_K( T(\cdot; v_r^{S} ) \# \eta, \nu) 
        \le C L_K (\exp(C_{\pmb \Q} r) -1) h_S^k   + \MMD_K( T(\cdot; v_r) \# \eta, \nu)
    \end{equation*}
    where $C_{\pmb \Q} >0$ is the embedding constant of $\pmb \Q$ and $C >0$ is independent 
    of $h_S, r, v_r$.
% \bp{Corrected: Pls check
%     \begin{equation*}
%         \MMD_K( T(\cdot; v_r^{S} ) \# \eta, \nu) 
%         \le \dfrac{C_3 L_\kappa} {C_2} h_S^\ell (\exp(C_2 r) -1)   + \MMD_K( T(\cdot; v_r) \# \eta, \nu)
%     \end{equation*}
%     with $C_2$ as compact embedding constant and $C_3$ from diagonal kernel bound.
% }
\end{proposition}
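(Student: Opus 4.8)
The plan is to compare the restricted minimizer $v_r^S$ against the unrestricted ball minimizer $v_r$ by exhibiting a good competitor inside $\pmb \Q_r^S$. Since $v_r^S$ minimizes $\MMD_K(T(\cdot; v)\#\eta, \nu)$ over $v \in \pmb \Q_r^S$, any admissible $\tilde v \in \pmb \Q_r^S$ already gives $\MMD_K(T(\cdot; v_r^S)\#\eta,\nu) \le \MMD_K(T(\cdot; \tilde v)\#\eta,\nu)$. The natural choice is $\tilde v := I_S v_r$, the minimal-norm kernel interpolant of the unrestricted optimizer $v_r$ at the inducing points $S$, which by construction lies in the span of $\{\pmb Q(s_j,\cdot)\}_{j=1}^M$. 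Because $\tilde v$ is the orthogonal projection of $v_r$ onto this span in $\pmb \Q$, the projection property yields $\|\tilde v\|_{\pmb \Q} \le \|v_r\|_{\pmb \Q} \le r$, so $\tilde v \in \pmb \Q_r^S$ and the competitor is feasible. Existence of both $v_r$ and $v_r^S$ is guaranteed by \Cref{prop:KODE-existence-of-minimizers}.

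With this competitor fixed, I would insert the $v_r$ term through the triangle inequality,
\begin{equation*}
\MMD_K(T(\cdot; v_r^S)\#\eta,\nu) \le \big| \MMD_K(T(\cdot; \tilde v)\#\eta,\nu) - \MMD_K(T(\cdot; v_r)\#\eta,\nu) \big| + \MMD_K(T(\cdot; v_r)\#\eta,\nu),
\end{equation*}
and then control the first term with the continuity estimate of \Cref{prop:MMD-K-depends-continuously-on-data}, which gives $\big| \MMD_K(T(\cdot; \tilde v)\#\eta,\nu) - \MMD_K(T(\cdot; v_r)\#\eta,\nu) \big| \le \frac{L_K(\exp(L_{v_r})-1)}{L_{v_r}}\|\tilde v - v_r\|_\infty$, where $L_{v_r}$ is the Lipschitz constant of $v_r$ entering \Cref{lem:ODE-velocity-field-continuity}. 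The trailing $\MMD_K(T(\cdot; v_r)\#\eta,\nu)$ is exactly the last term of the claimed bound, so the problem reduces to estimating $\|\tilde v - v_r\|_\infty$ and the exponential prefactor.

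The main step, and the origin of the $h_S^k$ rate, is the scattered-data interpolation bound; this is the part I expect to be the \emph{main obstacle}. Here I would invoke the classical kernel approximation estimates for smooth kernels (e.g.\ \cite{wendland2004scattered}): because $V \in C^{2k}(\Gamma\times\Gamma)$ and $\Gamma = [0,1]\times\Omega$ satisfies the requisite interior cone/boundary-regularity condition, there exist $h_0>0$ and $C>0$ such that for every $h_S \le h_0$ and every $f\in\V$ the scalar interpolant obeys $\|f - I_S f\|_{L^\infty(\Gamma)} \le C h_S^k \|f\|_\V$. Applying this componentwise to $v_r = ((v_r)_1,\dots,(v_r)_d)$ and combining through the diagonal-kernel identity $\|v_r\|_{\pmb \Q}^2 = \sum_{\ell=1}^d \|(v_r)_\ell\|_\V^2$ gives
\begin{equation*}
\|\tilde v - v_r\|_\infty = \sup_{s\in\Gamma}\Big(\sum_{\ell=1}^d |(v_r)_\ell(s) - I_S (v_r)_\ell(s)|^2\Big)^{1/2} \le C h_S^k \|v_r\|_{\pmb \Q} \le C h_S^k r .
\end{equation*}
The delicate points are establishing the sup-norm rate with the correct power $k$ coming from the $C^{2k}$ smoothness and checking that $\Gamma$ (in particular $\Omega$) meets the geometric hypotheses these estimates require; the diagonal-kernel and minimal-norm bookkeeping is then routine.

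It remains to recast the prefactor. Since $v_r \in \pmb \Q_r$, the compact embedding $\pmb \Q \hookrightarrow \VV$ controls the Lipschitz constant governing the flow in \Cref{lem:ODE-velocity-field-continuity} through $L_{v_r} \le \|v_r\|_\VV \le C_{\pmb \Q}\|v_r\|_{\pmb \Q} \le C_{\pmb \Q} r$. Using that $L \mapsto (\exp(L)-1)/L$ is increasing, I would then bound $\frac{(\exp(L_{v_r})-1)}{L_{v_r}}\, r \le \frac{\exp(C_{\pmb \Q} r)-1}{C_{\pmb \Q} r}\, r = \frac{\exp(C_{\pmb \Q} r)-1}{C_{\pmb \Q}}$, so that
\begin{equation*}
\frac{L_K(\exp(L_{v_r})-1)}{L_{v_r}}\,\|\tilde v - v_r\|_\infty \le \frac{C L_K}{C_{\pmb \Q}}\,\big(\exp(C_{\pmb \Q} r)-1\big)\, h_S^k .
\end{equation*}
Absorbing $1/C_{\pmb \Q}$ into the constant $C$ and combining with the second paragraph produces exactly $C L_K (\exp(C_{\pmb \Q} r)-1) h_S^k + \MMD_K(T(\cdot; v_r)\#\eta,\nu)$, as claimed, with $C$ independent of $h_S$, $r$, and $v_r$.
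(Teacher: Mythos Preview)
Your proposal is correct and follows essentially the same approach as the paper: introduce the minimal-norm interpolant of $v_r$ on $S$ as a feasible competitor in $\pmb \Q_r^S$, use optimality plus the MMD triangle inequality, control the resulting MMD difference via the flow/MMD stability estimates, bound $L_{v_r}\le C_{\pmb\Q}r$ through the embedding, and invoke the $C^{2k}$ scattered-data rate componentwise. The only cosmetic difference is that the paper splits $\MMD_K(T(\cdot;\tilde v)\#\eta,\nu)$ directly as $\MMD_K(T(\cdot;\tilde v)\#\eta,T(\cdot;v_r)\#\eta)+\MMD_K(T(\cdot;v_r)\#\eta,\nu)$ and then applies \Cref{lem:MMD-stability,lem:ODE-velocity-field-continuity}, whereas you route through \Cref{prop:MMD-K-depends-continuously-on-data}; these are equivalent.
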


\begin{proof}
% First observe that the Sobolev embedding theorem \cite{adams} implies that 
% $H^s(\Gamma; \R^d) \in \VV$ under the hypothesis of the theorem. Nowc
Under the hypothesis of the theorem on $K$ and $\kappa$ we have 
that $\pmb \Q \subset \VV$.
Now consider the velocity field 
\begin{equation*}
    \widehat{v}^S_r := \argmin_{v \in \VV } \| v \|_{\pmb \Q} \quad \st \quad 
    v(s_j) = v_r(s_j), \quad j=1, \dots, M, 
\end{equation*}
which is simply the interpolant of $v_r$ on  $S$ in $\pmb \Q$. 
Note  that $\widehat{v}^S_r \in \pmb \Q^S_r$ since $\| \widehat{v}^S_r \|_{\pmb \Q} 
\le \| v_r \|_{\pmb \Q}$.
Then using the optimality of  $v^S_r$  (recall \eqref{velocity-field-minimizers}) 
along with the triangle inequality for $\MMD_K$
we obtain
\begin{equation}\label{MMD-between-predicted-target}
    \MMD_K( T(\cdot; v^S_r ) \# \eta, \nu) 
    \le \MMD_K( T(\cdot; \widehat{v}^S_r ) \# \eta, T(\cdot; v_r ) \# \eta) 
    + \MMD_K( T(\cdot; v_r ) \# \eta, \nu).
\end{equation}
Let us now focus on the first term on the right-hand side.  
Applying \Cref{lem:MMD-stability} and \Cref{lem:ODE-velocity-field-continuity}
in that order 
yields 
\begin{equation*}
    \MMD_K( T(\cdot; \widehat{v}^S_r ) \# \eta, T(\cdot; v_r ) \# \eta) 
    \le L_\kappa  \frac{ \exp( L_{v_r}) -1 }{ L_{v_r}} 
    \| \widehat{v}^S_r - v_r \|_\infty.
\end{equation*}
Since we assumed $\pmb \Q$ is compactly embedded in $\VV$ then 
% \subset H^s(\Gamma; \R^d)$ then $L_{v_r} \lesssim 
% \| v_r \|_{H^s(\Gamma; \R^d)} \lesssim \| v_r \|_{\pmb \Q}$. 
% \bp{(We never assume $\Q \subset H^s$, only that $\Q$ is compactly embedded in $\VV$. But that should be enough right? We'd get 
$L_{v_r} \leq \| v_r \|_{\VV} \le C_{\pmb \Q} \| v_r \|_{\pmb \Q} \leq  C_{\pmb \Q} r$. Observing that 
 $\frac{1}{t}(\exp(t) - 1)$ is monotone increasing for $t >0$ we obtain the bound 
% there exist constant $C_1, C_2 > 0$ so that \bp{(BP: There should only be one constant $C_2$ right from compact embedding? Where does the $C_1$ constant come from again?)}
\begin{equation}\label{MMD-between-vector-field-and-interpolant}
    \MMD_K( T(\cdot; \widehat{v}^S_r ) \# \eta, T(\cdot; v_r ) \# \eta) 
    \le L_K  \frac{ \exp( C_{\pmb \Q} r) -1 }{C_{\pmb \Q} r}  
    \| \widehat{v}^S_r - v_r \|_\infty.
\end{equation}
% \bp{(Shouldnt the term in the exponential be $\exp(Cr)$? Here the constant $C$ is depends on $\Gamma$ and $C \ge 0$ right?. )}
It remains for us to bound the approximation error $\| \widehat{v}^S_r - v_r \|_\infty$. Recall that by definition 
\begin{equation}\label{vector-field-error-inf-norm-recalled}
    \| \widehat{v}^S_r - v_r \|_\infty = \sup_{s \in \Gamma} | \widehat{v}^S_r(s) - v_r(s) |
    = \sup_{s \in \Gamma} \left( \sum_{j=1}^d 
    | (\widehat{v}^S_{r,j}(s) - v_{r,j}(s) |^2 \right)^{1/2}.
\end{equation}
% \bp{(Is it true in general that for a vector valued $f: \Omega \rightarrow \R^d$,  $\| f\|_\infty = \sup_{x\in \Gamma} | f(x)| = \sup_{x\in \Gamma} \| f(x)\|_2$. I refer to this above where we define $\LL_0$.)}

On the other hand, since we assumed $\pmb Q$ is a 
diagonal kernel then $\widehat{v}^S_{r,j}$ is precisely the $\V$
interpolant  of $v_{r,j}$ for $j=1, \dots, d$. Then by 
\cite[Thm.~11.13]{wendland2004scattered}
we have that $\exists h_0 >0$ such that for $h_S \le h_0$ we
 have
\begin{equation*}
    \| \widehat{v}^S_{r,j} - v_{r,j} \|_\infty
    \le C h_S^k \| v_{r,j} \|_{\V},
\end{equation*}
for a constant $C > 0$ that is independent of $h_S$ and $v_{r,j}$.
% \bp{(Just a quick question, but in (2.11) can we move the $\sup$ through the square root so that we get the $\infty$-norm? Given that we have a positive function, I think we can?)}
Substituting this bound in \eqref{vector-field-error-inf-norm-recalled}
we obtain the error bound 
\begin{equation*}
    \| \widehat{v}^S_r - v_r \|_\infty 
    \le C h_S^k \left(\sum_{j=1}^d \| v_{r, j}\|^2_\V\right)^{1/2}
    = C  h_S^k  \| v_r\|_{\pmb\Q}
    \le Ch_S^k r
\end{equation*}
% \bp{(Here the constant $C$ is depends on $\kappa$ and $\Omega$ and $C \ge 0$ right? Combining all the constants we get that $C(\kappa, \Omega, \Gamma) > 0$ as stated?)} 
Finally, substituting this result in (\ref{MMD-between-vector-field-and-interpolant}) and plugging that back into (\ref{MMD-between-predicted-target}) yields the result.
{}
\end{proof}

Combining \Cref{prop:bound-on-M1,prop:approximation-error-vector-field-on-S}
we can finally state our main theoretical result, which serves as 
the precise version of \Cref{thm:main}: 

\begin{corollary}\label{cor:main}
    Suppose \Cref{prop:bound-on-M1,prop:approximation-error-vector-field-on-S} are satisfied and that there exists a  vector field 
    $v^\dagger \in \VV$  such that $T(\cdot; v^\dagger) \# \eta = \nu$. 
    Then  with probability $1 - \delta$, for $\delta \in (0,1)$, it holds that 
    \begin{equation}\label{master-bound}
    \begin{aligned}
        \MMD_K( T(\cdot; v^{S, N}_r) \# \eta, \nu) 
        & \le  C \Bigg[ (\exp( C_{\pmb \Q} r) - 1)  
        h^\ell_S + \frac{ \exp( L_{v^\dagger} ) - 1  }{L_{v^\dagger}} \inf_{v \in \pmb \Q_r} \| v - v^\dagger \|_\infty  \\
       & \qquad + 
        \sqrt{\frac1N} \left( 2 + \sqrt{\log \left( \frac1\delta \right) } \right) \Bigg], 
    \end{aligned}
    \end{equation}
   where   $C> 0$ 
is independent of $S, r, N$ and $  v^\dagger$.
% \bp{Corrected: Pls check.
%     \begin{equation*}
%         \MMD_K( T(\cdot; v^{S, N}_r) \# \eta, \nu) 
%         \le \dfrac{C_3 L_k}{C_2} \left[ \big( \exp(\| C_2 \cdot C_r v^\dagger \|_{\pmb\Q}) - 1 \big) 
%         h^\ell_S \right] 
%         + 
%         \sqrt{8 \kappa(0)}\sqrt{\frac1N} \left( 2 + \sqrt{\log \left( \frac1\delta \right) } \right),
%     \end{equation*}

% where $C_2$ is embedding constant, $C_3$ is bound for diagonal kernel.

\end{corollary}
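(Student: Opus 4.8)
The plan is to obtain \Cref{cor:main} as a direct composition of the two preceding propositions, together with one additional stability estimate that injects the ground-truth field $v^\dagger$ into the bound. Concretely, I would first chain \Cref{prop:bound-on-M1} and \Cref{prop:approximation-error-vector-field-on-S}: the former controls the generalization gap between the empirical minimizer $v_r^{S,N}$ and the population inducing-point minimizer $v_r^S$, while the latter controls the scattered-data approximation gap between $v_r^S$ and the full-space minimizer $v_r$. Substituting the second bound into the first yields, with probability at least $1-\delta$,
\begin{equation*}
\MMD_K( T(\cdot; v_r^{S,N})\#\eta, \nu) \le C L_K (\exp(C_{\pmb \Q} r) - 1) h_S^k + \MMD_K(T(\cdot; v_r)\#\eta, \nu) + 2\sqrt{\tfrac{2}{N}\sup_{x\in\Omega}K(x,x)}\left(2 + \sqrt{\log(1/\delta)}\right),
\end{equation*}
so it remains only to bound the single population term $\MMD_K(T(\cdot; v_r)\#\eta, \nu)$.

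For that term I would use the defining optimality of $v_r$ in \eqref{population-KODE-with-r}, which gives $\MMD_K(T(\cdot; v_r)\#\eta,\nu) = \inf_{v \in \pmb \Q_r}\MMD_K(T(\cdot; v)\#\eta,\nu)$. The key observation is that the assumption $\nu = T(\cdot; v^\dagger)\#\eta$ lets me rewrite the target as a pushforward of $\eta$, so that for every admissible $v \in \pmb \Q_r$,
\begin{equation*}
\MMD_K(T(\cdot; v)\#\eta, \nu) = \MMD_K(T(\cdot; v)\#\eta, T(\cdot; v^\dagger)\#\eta).
\end{equation*}
Applying \Cref{lem:MMD-stability} followed by \Cref{lem:ODE-velocity-field-continuity} — crucially with $v^\dagger$ taken as the \emph{first} velocity-field argument and evaluated at equal spatial points $x = x'$ — produces the prefactor $\frac{\exp(L_{v^\dagger}) - 1}{L_{v^\dagger}}$ rather than a factor involving $L_{v_r}$, and bounds the right-hand side by $L_K \frac{\exp(L_{v^\dagger}) - 1}{L_{v^\dagger}} \|v - v^\dagger\|_\infty$. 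Taking the infimum over $v \in \pmb \Q_r$ then gives
\begin{equation*}
\MMD_K(T(\cdot; v_r)\#\eta, \nu) \le L_K \frac{\exp(L_{v^\dagger}) - 1}{L_{v^\dagger}} \inf_{v \in \pmb \Q_r} \|v - v^\dagger\|_\infty.
\end{equation*}

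Finally I would substitute this estimate into the chained bound and absorb the kernel-dependent constants $L_K$ and $\sqrt{2\sup_{x\in\Omega} K(x,x)}$ (both finite, since $\Omega$ is bounded and $K$ satisfies \eqref{kappa-lipschitz-at-zero}) into a single constant $C>0$ independent of $S$, $r$, $N$, and $v^\dagger$, which reproduces \eqref{master-bound}. I expect the main subtlety — more than a genuine obstacle — to be the orientation of \Cref{lem:ODE-velocity-field-continuity} in the second step: one must route the argument through $v^\dagger$ as the base field so that the exponential prefactor is governed by the Lipschitz constant of the ground truth and not by $L_{v_r}$ (which is only controlled by $C_{\pmb \Q} r$). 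Everything else is routine bookkeeping of constants and the triangle inequality for the MMD. I note in passing that the exponent on $h_S$ in \eqref{master-bound} should read $h_S^k$, consistent with \Cref{prop:approximation-error-vector-field-on-S}.
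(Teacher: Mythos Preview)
Your proposal is correct and follows essentially the same route as the paper: chain \Cref{prop:bound-on-M1} with \Cref{prop:approximation-error-vector-field-on-S}, then bound the residual population term $\MMD_K(T(\cdot;v_r)\#\eta,\nu)$ by invoking optimality of $v_r$, rewriting $\nu = T(\cdot;v^\dagger)\#\eta$, and applying \Cref{lem:MMD-stability} together with \Cref{lem:ODE-velocity-field-continuity} with $v^\dagger$ as the base field. Your observation about the orientation of the ODE perturbation lemma and the typo $h_S^\ell \to h_S^k$ are both accurate.
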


\begin{proof}
    Applying \Cref{prop:bound-on-M1} and \Cref{prop:approximation-error-vector-field-on-S} 
    and combining the independent constants into $C>0$ yields the same bound as \eqref{master-bound}
    with the bias term 
    $\inf_{v \in \pmb \Q_r} \| v - v^\dagger\|_\infty$ replaced by $\MMD_K( T(\cdot ; v_r) \# \eta, \nu)$. Let $v^\dagger_r = \min_{v \in \pmb \Q_r} \| v - v^\dagger \|_\infty$ 
    and observe that the optimality of $v_r$ implies that 
    \begin{equation*}
        \MMD_K( T(\cdot ; v_r) \# \eta, \nu)  \le  \MMD_K( T(\cdot ; v^\dagger_r) \# \eta, \nu)
        = \MMD_K( T(\cdot ; v^\dagger_r) \# \eta, T(\cdot ; v^\dagger)\# \eta).
    \end{equation*}
    Applying \Cref{lem:MMD-stability} and \Cref{lem:ODE-velocity-field-continuity} 
    further gives the sequence of bounds  
    \begin{equation*}
        \begin{aligned}
    \MMD_K( T(\cdot ; v^\dagger_r) \# \eta, T(\cdot ; v^\dagger)) 
    & \le L_K \| T(\cdot ; v^\dagger_r) - T(\cdot ; v^\dagger) \|_{L^\infty_\eta(\Omega;\Omega)} \\
    & \le L_K \frac{(\exp( L_{v^\dagger}) - 1}{ L_{v^\dagger}} \| v^\dagger_r - v^\dagger \|_\infty. 
        \end{aligned}
    \end{equation*}
    which yields the desired result.
\end{proof}

\subsection{Unconstrained minimization with a regularization term}\label{subsec:regularization-bounds}
So far, our theoretical analysis has been focused on the error analysis of problem \eqref{KODE-restricted-recalled} (which also coincides with \eqref{KODE-formulation-with-inducing-points-related}), however 
our numerical algorithms are based on the unconstrained relaxation \eqref{KODE-formulation-with-inducing-points}, which we recall as 
\begin{equation}
v^{S,N}_\lambda  :=
\left\{
\begin{aligned}
      & \argmin_{v \in \VV} &&\MMD_K ( \phi(1, \cdot) \# \eta^N, \nu^N) + \lambda  \sum_{\ell=1}^d 
      \| v_\ell \|_{\V}^2 \\ 
      & \st &&  \phi_t  = v( t, \phi),  \quad \phi(0, x) = x \quad \forall x  \in \Omega, \\
      &  && v(s) = (v_1(s), \dots, v_d(s)) \quad v_\ell = c_\ell^TV(S, \cdot).
\end{aligned}
\right. \label{KODE-unconstrained}
\end{equation}
Note our abuse of notation here with $v^{S,N}_r$ denoting the solution to 
\eqref{KODE-restricted-recalled} while 
$v^{S,N}_\lambda$ denotes its unconstrained relaxation in
\eqref{KODE-unconstrained}.
 Since $v^{S,N}_\lambda$ is optimal we 
immediately obtain the bound 
\begin{equation*}
\begin{aligned}
        \MMD_K( T(\cdot; v^{S,N}_\lambda) \# \eta^N, \nu^N) 
    & \le \MMD_K( T(\cdot; v^{S,N}_r \# \eta^N, \nu^N)  \\ 
    & \qquad + 
    \lambda \max \left\{0, \sum_{\ell=1}^d \| v^{S,N}_{r, \ell}\|^2_{\V} - 
    \| v^{S,N}_{\lambda, \ell} \|_{\V}^2  \right\}.
\end{aligned}
\end{equation*}
This suggests that the bound in \eqref{master-bound} can be extended to 
$v^{S,N}_\lambda$ up to a (possibly non-zero) bias term 
concerning the choice of $\lambda$ and $r$.

\section{Numerical Implementation}\label{sec:algorithms}
In this section, we collect details around the numerical implementation of problem \eqref{KODE-formulation-with-inducing-points} and discuss our strategies in preparation for the
benchmark examples in \Cref{sec:experimental_setup}.

\subsection{Summary of the algorithm}
Noting that \eqref{KODE-formulation-with-inducing-points} can readily be 
implemented by discretizing the ODE, we focus our attention here on the choice of 
kernels and our approach for tuning hyper-parameters. 

\paragraph{The choice of the kernel $V$}
We will work with kernels $V(s,s')$ that are of product form in space and time, that is, 
\begin{equation*}
    V(s,s') \equiv V((t, x), (t',x')) = W(t, t')U(x,x')  
\end{equation*}
for Mercer kernels $W: [0,1] \times [0,1] \to \R$ and $U:\Omega \times \Omega \to \R$
with RKHS spaces $ \W$ and $ \U$, respectively.
Choosing $\W = H^1([0,1])$ yields the particularly useful form 
\begin{equation}\label{sobolev-in-time-RKHS-norm}
    \| v_\ell \|_{\V}^2 = \lambda_1 \int_0^1 \| v_\ell(t, \cdot) \|_{\U}^2 \: \dd t 
    + \lambda_2 \int_0^1 \| \dot{v}_\ell(t, \cdot) \|_{\U} ^2 \: \dd t, \qquad \ell= 1, \dots, d,
\end{equation}
where $\dot{v}_\ell$ denotes the time derivative of $v_\ell$ and  $\lambda_1, \lambda_2 >0$ are constants. We note that 
the second term is non-standard in the context of diffeomorphic matching \cite[Ch.~10]{younes2010shapes}
or OT \cite{onken2021ot} (see also \cite[pp.~159]{villani2009optimal}). In our numerical 
experiments in \Cref{sec:experimental_setup} we will present ablation studies 
that demonstrate the effect of the second term leading to smoother flow maps; see 
\Cref{fig:toy_non_auto_trajectory}. 
We will also consider settings where $\W$ consists of constant functions (i.e., $\dot{v}_\ell$
is maximally penalized) in which case the velocity field is constant in time, leading 
to an autonomous ODE in \eqref{KODE-formulation-with-inducing-points}.
We will refer to this setting as {\it Autonomous KODE}.
For the choice of the kernel $U$ we often use standard choices such as the 
Gaussian/squared exponential kernel or the Laplace kernel 
\begin{equation}\label{Gaussian-and-Laplace-kernel}
    U_{\text{Gaussian}}(x, x') = \exp\left( - \frac{ |x - x'|^2}{2 \gamma_U^2} \right)
    \qquad U_{\text{Laplace}}(x, x') = \exp\left( - \frac{ |x - x'|}{\gamma_U} \right)
\end{equation}
where in both cases $\gamma >0$ denotes a lengthscale parameter. 

\paragraph{The choice of the inducing points $S$}
Since the kernel $V$ is of product form, it is natural for us to also choose $S$ to be 
of a similar structure. To this end, we choose a set of spatial
inducing points $X = \{x_1, \dots, x_J \} \subset \Omega$ 
and writing $v_\ell (t, x) = c_j(t)^T U(X, x)$ for a set of 
coefficient vector fields $c_j: [0,1] \to \R^J$. This 
leads to a spatial discretization 
of \eqref{sobolev-in-time-RKHS-norm} in the following form:
\begin{equation*}
        \| v_\ell \|_{\V}^2 \approx 
        \lambda_1\int_0^1 c_\ell(t)^T  U(X, X) c_\ell(t) \: \dd t 
    + \lambda_2 \int_0^1  \dot{c}_\ell(t)^T U(X,X) \dot{c}_\ell(t) \: \dd t, \qquad \ell= 1, \dots, d.
\end{equation*}
The coefficient functions $c_\ell(t): [0,1] \to \R^J$ can be viewed as the trajectories of a 
a system of coupled ODEs. We further discretize the above integrals using the mid-point rule 
with intervals of size $\Delta t$ (chosen so that $1/\Delta t$ is an integer), to obtain the discrete RKHS penalty, which is implemented 
in our code
\begin{equation*}
        \| v_\ell \|_{\V}^2 \approx 
        \Delta t \sum_{k=1}^{1/ \Delta t} \left( \lambda_1 c_{\ell,k}^T  U(X, X) c_{\ell, k} 
    +\lambda_2 \dot{c}_{\ell,k}^T U(X,X) \dot{c}_{\ell,k} \right), \qquad \ell= 1, \dots, d.
\end{equation*}
The time derivatives $\dot{c}_{\ell, j}$ are further computed using 
standard finite-difference formulae such as centered differences in the interior of the 
unit interval and one-sided formulae at the boundaries. 
The aforementioned discretization of the RKHS penalty arises from choosing a 
set of space-time inducing points $S$ on a lattice obtained by 
tensorizing $X$ with a uniform grid of points in time. More precisely, let $t_k = k \Delta t$
for $k=0, \dots, 1/\Delta t$. Then, defining $s_{j,k} = (x_j, t_k) $ we 
obtain the set of inducing points $S = \{ s_{j,k} \}_{j=1, k=1}^{j=J, k = 1/\Delta t}$
that is consistent with our error analysis in \Cref{sec:theory}.

\paragraph{The choice of the kernel $K$}
In all of our experiments, we take the MMD kernel $K$ to belong to the radial family, and in 
particular, we take $K$ to be the Laplace kernel as in \eqref{Gaussian-and-Laplace-kernel}.
We note that the choice of the Gaussian kernel or the well-known 
inverse quadratic kernel is also possible, although we found that the differences were minimal.
It is important to note that in our framework, the choice of the kernels $K$ and $U$ are 
not related, and the parameters of these kernels can be tuned independently of each other.

\paragraph{Tuning hyper-parameters}
Our model contains a number of hyper-parmaeters such as the regularization parameter $\lambda$, 
the lengthscales $\gamma_U$ and $\gamma_K$ for the kernels $U, K$ respectively, and the 
step size $\Delta t$. We utilized standard cross-validation techniques and the median heuristic for choosing our kernel lengthscales 
\cite{Gretton_2012}; details can be 
found in our repository \footnote{\texttt{\url{https://github.com/TADSGroup/KernelODETransport}}}.

\section{Experiments}
\label{sec:experimental_setup}

Below, we collect the results of our numerical experiments on a collection of benchmark problems.
In all of our examples, we take the reference measure $\eta$ to be a standard Gaussian distribution.

\subsection{Overview of the experiments}
First, we evaluated the performance of KODE for sampling two-dimensional measures that are standard benchmarks for generative modeling tasks \cite{grathwohl2018ffjord}.
In these seven examples, the target measures $\nu$ are concentrated on complex shapes like a pinwheel or a checkerboard. Our data sets for these experiments consist of 25,000 samples drawn from $\nu$ 
that are split into training, validation, and test samples. We used 5000 training samples 
for all of our examples except for the checkerboard data set, for which 10000 samples were used 
(the latter is the most challenging of the 2D benchmarks).
% We used $10,000$ training examples for the checkerboard dataset and $5000$ training points for the remaining datasets.
The 2D benchmark results are collected in \Cref{sec:2D-benchmark}. 

Next, we applied KODE to higher-dimensional benchmark data sets: \texttt{POWER}, \texttt{GAS}, 
\texttt{HEPMASS}, and  \texttt{MINIBOONE} from the University of California Irvine (UCI) machine learning data repository and the Berkeley Segmentation Dataset (\texttt{BSDS300}) from UC Berkeley Computer Vision group; all of 
these  are commonly used benchmarks in the normalizing flow literature \cite{kobyzev2020normalizing, Paramakarios_autoreg}. These data sets range from 6 to 63 dimensions and have distributions 
with varying levels of complexity.
All of these data sets were implemented using their off-the-bench training and 
testing splits, and the results are collected in 
\Cref{sec:high-dim-benchmark}

% In all of our benchmarks we tune hyper-parameters using cross validation on the pertinent 
% validation sets;
% \Cref{tab:hyper} show these hyperparameters for all data sets. 
% \bhtodo{Please fix reference}

We also performed two experiments in addition to the standard benchmarks above. In \Cref{sec:MNIST}
we present an example of image generation for the MNIST data set by augmenting KODE with 
neural net features that are lightly trained. In \Cref{sec:triangular-transport} we 
present use a small modification of KODE to obtain a triangular map that is capable of 
likelihood-free inference and use it to infer the parameters of a Lotka-Volterra ODE.

% \texttt{POWER} describes the electricity power consumption from a single household located in France over a period of 47 months. \texttt{GAS} contains the readings from chemical sensors exposed to a mixture of ethylene and carbon monoxide mixtures over a 12 hour period. \texttt{HEPMASS} describes particle collisions in high energy physics while \texttt{MINIBOONE} measures neutrino oscillations from the MiniBooNE experiment at Fermilab. \texttt{BSDS300} contains $8 \times 8$ monochrome patches from the natural images. 
% For additional descriptions and pre-processing details for individual datasets, please refer to . We also train the model for generating MNIST digits. For the task of conditioning, we use the pinwheel and spiral datasets from the two-dimensional benchmarks. We also infer the parameters in a system of non-linear coupled ODEs, the Lotka-Volterra population model, based on noisy observations of the ODE trajectory. 
% The higher-dimensional benchmark data sets are pre-split into training, validation, and test data sets.
% We use the same training set up as \cite{baptista2023conditional} for the Lotka-Volterra conditioning task.
% For all the models, we tune the hyperparameters on the validation set.

% \paragraph{Training details} 
% For the 2D data sets, 

\paragraph{Comparison metrics}
We compared the performance of   KODE with the OT-Flow algorithm of 
\cite{onken2021ot}, which is also a method based on the dynamic formulation of 
transport although it utilizes neural nets to parameterize the velocity fields.
To be fair, in comparison of the time and complexity of the models, we 
trained OT-Flow using code from the original article on the same hardware 
that KODE was trained on.

In order to compare the quality of the produced samples for both methods we used 
the MMD metric but we made sure the kernel for this metric is different from 
the one used in the training of KODE to be fair to OT-Flow. 
Indeed, following \cite{onken2021ot}
we used the Gaussian kernel with a unit lengthscale for this purpose while KODE was 
trained using the Laplace kernel.
Finally, in all of our experiments we report the normalized MMD values 
which denotes the MMD between the generated samples and the test data set 
normalized by the MMD between reference samples and the test data.

% method on sampling from benchmark distributions. We compare the methods based on their number of model parameters, time taken for training, and quality of the generated samples. Sample quality is evaluated using normalized maximum mean discrepancy (MMD), which measures how much a generative algorithm reduces the MMD between reference and target samples after training. Specifically, we calculate the ratio of the MMD between generated samples and target samples versus reference and target samples, using an RBF kernel $k(x_i, x_j) = \exp{\left( - \|x_i - x_j \|_2^2 \right)}.$ This kernel is consistent with the implementation in OT-Flow \cite{Onken_2021}.

% An important point to note is that OT-Flow and KODE minimize different losses during training. OT-Flow minimizes KL-divergence loss while KODE minimizes an MMD loss. Then, it is natural to ask whether MMD is a fair metric to compare these models. 
% However, laplace kernel is used during training while the RBF kernel is used for comparison. Since these kernels belong to different families, we believe the comparison is fair and impartial. 

\subsection{2D benchmarks}\label{sec:2D-benchmark}
We start by comparing autonomous and non-autonomous KODE with OT-Flow on benchmark examples in two dimensions. \Cref{tab:numerical_results} compares these two versions of KODE 
with OT-Flow in terms of the number of parameters, training wall-clock time, 
and normalized MMD of the generated samples. Clearly, the autonomous KODE has 
the lowest number of parameters since it does not require time discretization
of the coefficient functions $c_{\ell}(t)$.
We observed that   KODE performed on par with OT-Flow on all benchmarks. The 
case of the \textit{checkerboard} measure is particularly interesting since KODE 
appears to outperform OT-Flow by a large margin. In all examples, we tried to 
match the number of degrees of freedom in non-autonomous KODE and OT-Flow, but 
we observed that the training wall-clock time for KODE was generally 
significantly shorter than OT-Flow. The autonomous KODE algorithm is 
much faster to train as expected, although this efficiency comes at the cost of test 
performance except for the {\it circles} data set where autonomous KODE appears to 
achieve the best performance despite being much simpler.

% We observe that KODE is more computationally efficient compared to OT-Flow; it achieves comparable MMD in almost half the training time with fewer parameters. Between the KODE models, the non-autonomous KODE generally outperforms autonomous KODE.

\Cref{fig:toy_non_auto_results} shows generated samples from non-autonomous KODE. The top row shows the target measure $\nu$, the middle row shows the KODE samples, and the bottom row shows the samples generated by pulling the target samples to the Gaussian reference 
by running the KODE model backward in time after training. 
In all examples, there is a good match between the target measure and the pushforward measure
and the pullback (i.e., reverse transport)  recovers the Gaussian reference measure 
with good quality.  

We further tested the effect of the time derivative penalties in our regularization 
term \eqref{sobolev-in-time-RKHS-norm} for the velocity fields. In 
\Cref{fig:toy_non_auto_trajectory}  we show the trajectories of a set of samples
from the reference to the target generated by KODE with and without penalizing the 
time derivatives, i.e., $\lambda_2 \neq 0$ or $\lambda_2 = 0$. 
We clearly observe that the sample trajectories 
are smoother in the latter case, implying that the resulting velocity fields 
are smoother and can be simulated more efficiently using adaptive ODE solvers.

% show the trajectories taken by individual samples from the reference measure to the final pushforward measure for four benchmark examples. The top row shows trajectories with the original Sobolev RKHS norm, $\| v\|^2_{\pmb \Q} = \int_0^1 \| v(t, \cdot) \|^2_{\pmb \V} dt +  \int_0^1 \| \dot{v}(t, \cdot)\|^2_{\pmb \V} dt$ while the bottom row shows trajectories with $L_2$ RKHS norm $\| v\|^2_{\pmb \Q} =  \int_0^1 \| v(t, \cdot) \|^2_{\pmb \V} dt$. In the latter, the time-derivative of $v(t, .)$ is not penalized. As expected, the former transports samples via smoother trajectories since the vector fields are constrained to be smoother in time.

\begin{table}[tbhp]
\centering
\scriptsize
\caption{Summary of numerical results on benchmark examples: we report the normalized MMD, number of trainable model parameters, and the total training time. We compare autonomous and non-autonomous KODE models with the OT-Flow model.} 
  \label{tab:numerical_results}
\begin{center}
\resizebox{\textwidth}{!}{%
  \begin{tabular}{|c|l c c c|} \hline
    
   Dataset & Model & $\#$ Parameters  & Training time(s) & Normalized MMD \\ \hline
   \multirow{3}{*}{Pinwheel} & KODE (autonomous)  & 800 & 470 & 4.0e-3 \\
   & KODE (non-autonomous) & 1000 & 476 & 3.4e-3 \\
   & \textbf{OT-FLOW} & \textbf{1229} & \textbf{910} & \textbf{1.8e-3} \\ \hline
   \multirow{2}{*}{2spirals} & KODE (auto) & 400 & 226 & 1.0e-2 \\
   & KODE (non-auto) & 1000 & 473 & 7.1e-3 \\
   & \textbf{OT-FLOW} & \textbf{1229} & \textbf{814} & \textbf{6.2e-3} \\ \hline
   \multirow{2}{*}{moons} & KODE (auto) & 200 & 220 & 9.8e-3 \\
   & \textbf{KODE (non-auto)} & \textbf{900} & \textbf{427} & \textbf{4.1e-3} \\
   & OT-FLOW & 1229 & 924 & 5.7e-3 \\ \hline 
   \multirow{2}{*}{8gaussians} & KODE (auto) & 800 & 473 & 1.1e-3 \\
   & KODE (non-auto) & 900 & 661 & 9.5e-4 \\
   & \textbf{OT-FLOW} & \textbf{1229} & \textbf{893} & \textbf{4.0e-4} \\ \hline
   \multirow{2}{*}{circles} & \textbf{KODE (auto)} & \textbf{800} & \textbf{270} & \textbf{3.4e-3} \\
   & KODE (non-auto) & 1000 & 485 & 4.5e-3 \\
   & OT-FLOW & 1229 &  825 & 7.2e-3 \\ \hline
   \multirow{2}{*}{swissroll} & KODE (auto) & 800 & 271 &  5.3e-3\\
   & KODE (non-auto) & 1000 & 485 & 4.6e-3 \\
   & \textbf{OT-FLOW} & \textbf{1229} & \textbf{899} & \textbf{3.0e-3} \\ \hline
   \multirow{2}{*}{checkerboard} & KODE (auto) & 1000 & 463 &  1.7e-3 \\
   & \textbf{KODE (non-auto)} & \textbf{1200} & \textbf{974} & \textbf{7.2e-4} \\
   & OT-FLOW & 1229 & 5919 & 1.7e-3 \\ \hline 

    % Higher Dimensional exampls
    \hline
    \multirow{3}{*}{\texttt{POWER} (d=6)} & KODE (auto)  & 6K & 0.24 &  4.8e-3 \\
   & \textbf{KODE (non-auto)}  & \textbf{12K} & \textbf{0.74} & \textbf{9.5e-4} \\
   & OT-FLOW & 18K & 0.55 &  2.0e-3 \\ \hline
   \multirow{3}{*}{\texttt{GAS} (d=8)} & KODE (auto) & 8K & 0.16& 6.1e-3 \\
    & \textbf{KODE (non-auto)}  & \textbf{64K} & \textbf{1.33} & \textbf{ 2.3e-3} \\
   & OT-FLOW & 127K & 2.36 &  3.2e-3 \\ \hline
   \multirow{3}{*}{\texttt{HEPMASS} (d=21)} & KODE (auto) & 105K & 1.39 & 5.1e-1 \\
    & KODE (non-auto)  & 84K & 1.5 &  7.0e-1 \\
   & \textbf{OT-FLOW }& \textbf{72K} & \textbf{3.45} &  \textbf{2.6e-2} \\ \hline
   \multirow{3}{*}{\texttt{MINIBOONE} (d=43)} & KODE (auto)  & 65K & 0.15 & 3.7e-1 \\
    & KODE (non-auto)  & 86K & 0.29 &  2.9e-1 \\
   & \textbf{OT-FLOW} & \textbf{78K} & \textbf{0.46} &  \textbf{3.6e-2} \\ \hline
   \multirow{3}{*}{\texttt{BSDS300} (d=63)} & KODE (auto)  & 63K & 2.89 &  2.7e-2 \\
   & KODE (non-auto)  & 252K & 3.37 &  1.8e-2 \\
   & \textbf{OT-FLOW} & \textbf{297K} & \textbf{4.09} &  \textbf{1.0e-2} \\ \hline
   \end{tabular}
   }
\end{center}
\end{table}

% % Results of non-auto transport
% \begin{figure}[ht]
% \centering
% \begin{overpic}[width=0.85\textwidth, trim=5 5 5 5, clip]{figures/toy/time_dep_results_2d.png}
%     \put(-7, 30){\rotatebox{90}{\parbox{3cm}{\centering {Data \\ $\nu$} }}}
%     \put(-7, 14){\rotatebox{90}{\parbox{3cm}{\centering {Forward \\ $T^\star\# \eta$}}}}
%     \put(-7, -4){\rotatebox{90}{\parbox{3cm}{\centering {Backward \\$(T^\star)^{-1}\# \nu$}}}}
% \end{overpic}
% \caption{Transport experiments on two dimensional benchmarks using non-autonomous KODE. (Top row) The empirical samples from complex measure $\nu$. (Middle row) Samples generated after transporting isotropic gaussian measure $\eta$. (Bottom row) Samples generated from the backward flow of KODE on $\nu$.}
% \label{fig:toy_non_auto_results}
% % \vskip -0.2in
% \end{figure}

% Trajectory of non-auto transport 
\begin{figure*}[ht]
\centering
\begin{overpic}[width=0.75\textwidth, trim=20 30 25 25, clip]{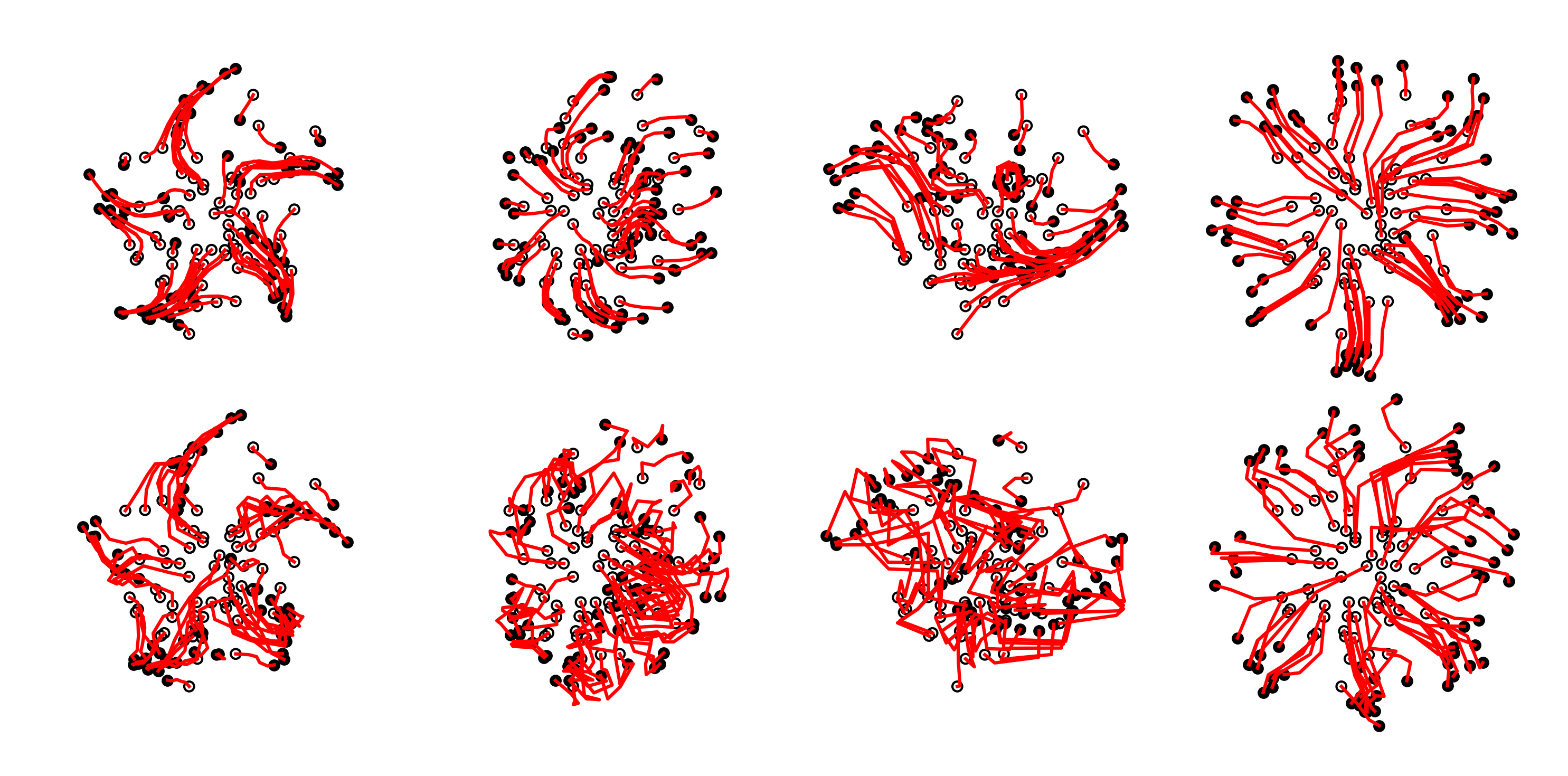}
    % \put(-8, 12.5){\rotatebox{90}{\parbox{3cm}{\centering \normalsize Forward flow }}}
    \put(-8, 22){\rotatebox{90}{\parbox{3cm}{\centering \small $\lambda_2 \neq 0$}}}
    \put(-8, -2){\rotatebox{90}{\parbox{3cm}{\centering \small $\lambda_2 = 0$}}}
\end{overpic}
\caption{KODE sample trajectories for  2D benchmarks 
with different choices of $\lambda_2$ in \eqref{sobolev-in-time-RKHS-norm}: (Top row) 
using $\lambda_2 \neq 0$ so the time derivatives of the coefficients 
are penalized; (Bottom row) using $\lambda_2 =0$.}
\label{fig:toy_non_auto_trajectory}
\vskip -0.2in
\end{figure*}

\subsection{Higher-dimensional benchmarks}\label{sec:high-dim-benchmark}
Next, we compared KODE with OT-Flow on high-dimensional data sets. \Cref{tab:numerical_results} shows our detailed quantitative comparisons 
akin to the 2D benchmarks. We found that non-autonomous KODE achieves competitive performance in three out of five examples. In particular, for the \texttt{POWER} data set, KODE significantly outperformed OT-Flow with a comparable training time while using a third of the parameters.  For the \texttt{GAS} and \texttt{BSDS300} data sets, KODE was on par with OT-FLOW albeit using  fewer parameters and faster training time in some cases. For \texttt{HEPMASS} and \texttt{MINIBOONE} KODE was not competitive, which is interesting 
since these data sets are lower-dimensional than {\texttt{BSDS300}}, suggesting 
that the latter data set may be high dimensional but somewhat simpler.
We also found that in all of these examples the non-autonomous KODE model outperformed 
autonomous KODE which is not surprising given that the former has more 
flexibility, but this observation implies that time-varying velocity fields 
do lead to better performance. In the case of the {\texttt{BSDS300}} data set 
it is interesting to note that autonomous KODE also achieved good performance, which 
further implies that this data set is not very complex.

% the performance lags by an order of magnitude to OT-Flow. Adding more parameters to the model did not improve performance. Between the KODE models, non-autonomous KODE outperforms autonomous KODE in all benchmarks, 

\Cref{fig:gas_time_dep_results} shows visualizations of the marginal distributions for the \texttt{GAS} data set. The top row shows 2D marginals from the target measure $\nu$, while the second row shows corresponding marginals for the KODE samples. Visually, these marginals 
are very close except that the KODE marginals are slightly blurred, indicating 
that the finer features of the data set are not captured.
The third row of \Cref{fig:gas_time_dep_results} shows the result of 
backward transport of test samples towards the Gaussian reference. Compared 
to the 2D benchmarks here we see that the normalizing flow is not as close to 
Gaussian as before. We found that this issue can be mitigated 
by adding an extra loss term during training which not only minimizers the MMD 
between the generate samples and the target, but also minimizes the MMD 
between the reference samples and the pull-back of the target samples. 
This additional penalty term leads to significantly better samples 
in the backward transport setting as indicated in the fourth row of \Cref{fig:gas_time_dep_results}.

% we see a good match between them. Third row shows the pullback measure on the reference $(T^*)^{-1}\# \nu$ obtained from the reverse flow of the learned ODE. In contrast to the two-dimensional case, the pullback measure does not resemble the reference isotropic gaussian measure $\eta.$ To improve the quality of the pullback, we train the ODE to transport $\eta$ to $\nu$ in the forward flow and $\nu$ to $\eta$ in the backward flow in an alternating fashion for every training iteration. The last row shows the resulting pullback measure. We see that the quality of the pullback improves significantly and resembles the gaussian reference $\eta$. The corresponding pushforward resembles the pushforward obtained by training only in the forward direction (result not shown). 

Finally, \Cref{fig:hepmass_time_dep_results} visualizes 2D marginals of the
\texttt{HEPMASS} data set which was one of the examples where KODE was not competitive 
with OT-Flow. Here we see that while the marginals are not a perfect match, KODE 
appears to capture the significant structural features of the target measure $\nu$.
We conjecture that this problem is difficult for KODE since the MMD between the 
reference and target samples is very small and our choices of the kernel 
$K$ are not sufficient.

% Non-autonomous KODE captures the general structure of the target measures; $\nu$ and $T^\star\#\eta$ are visually quite similar. However, the learned measure is more diffuse, resulting in smoother edges along the marginals, when compared to the target measure. This leads to MMD values that are approximately an order of magnitude lower than those obtained with the OTFlow method. Additional visualizations are shown in the supplementary materials.

% Result of non-auto transport for GAS
\begin{figure*}[ht]
\centering
\begin{overpic}[trim=5 5 5 5, clip, width=0.75\textwidth]{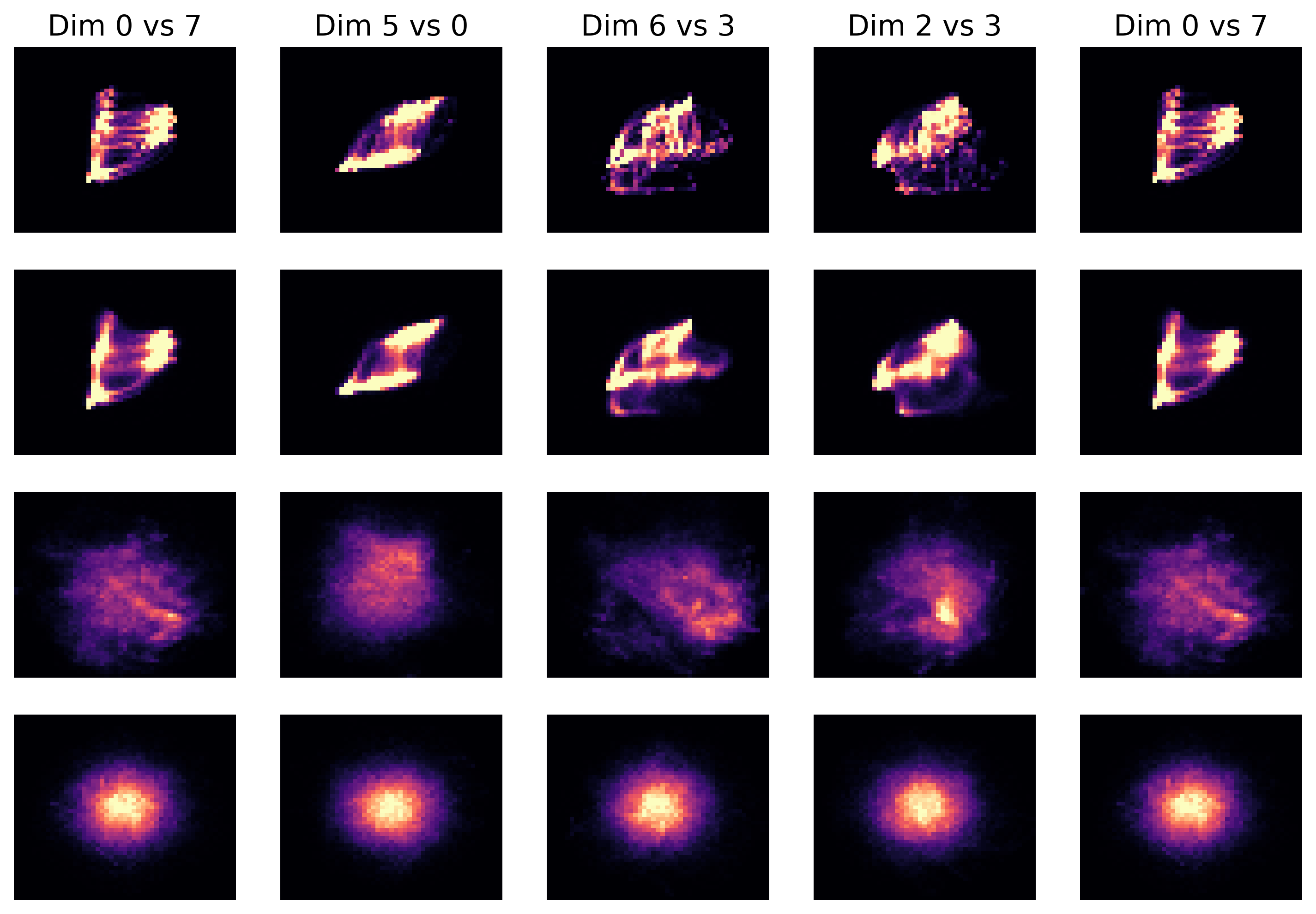}
    \put(-8, 47){\rotatebox{90}{\parbox{3cm}{\centering \small Data \\ $\nu$ }}}
    \put(-8, 30){\rotatebox{90}{\parbox{3cm}{\centering \small KODE \\ Forward}}}
    \put(-8, 16){\rotatebox{90}{\parbox{2cm}{\centering \small KODE \\ Backward }}}
    \put(-15, -2){\rotatebox{90}{\parbox{2cm}{\centering \small KODE \\ Backward (with extra 
    training)}}}
\end{overpic}
\caption{Transport experiments on \texttt{GAS} benchmark using non-autonomous KODE. (First row) Marginal distributions of the data measure $\nu$. (Second row) Marginal distributions of the learned pushforward measure. (Third row) Marginal distributions of the pullback measure obtained from the backward flow of KODE.
(Fourth row) Marginal distributions of the pullback measure obtained from the backward flow of KODE trained to transport $\eta$ and $\nu$ between each other simultaneously.}
\label{fig:gas_time_dep_results}
\end{figure*}

% Result of non-auto transport for HEPMASS
\begin{figure*}[ht]
\centering
\begin{overpic}[trim=5 235 5 5, clip, width=0.75\textwidth]             {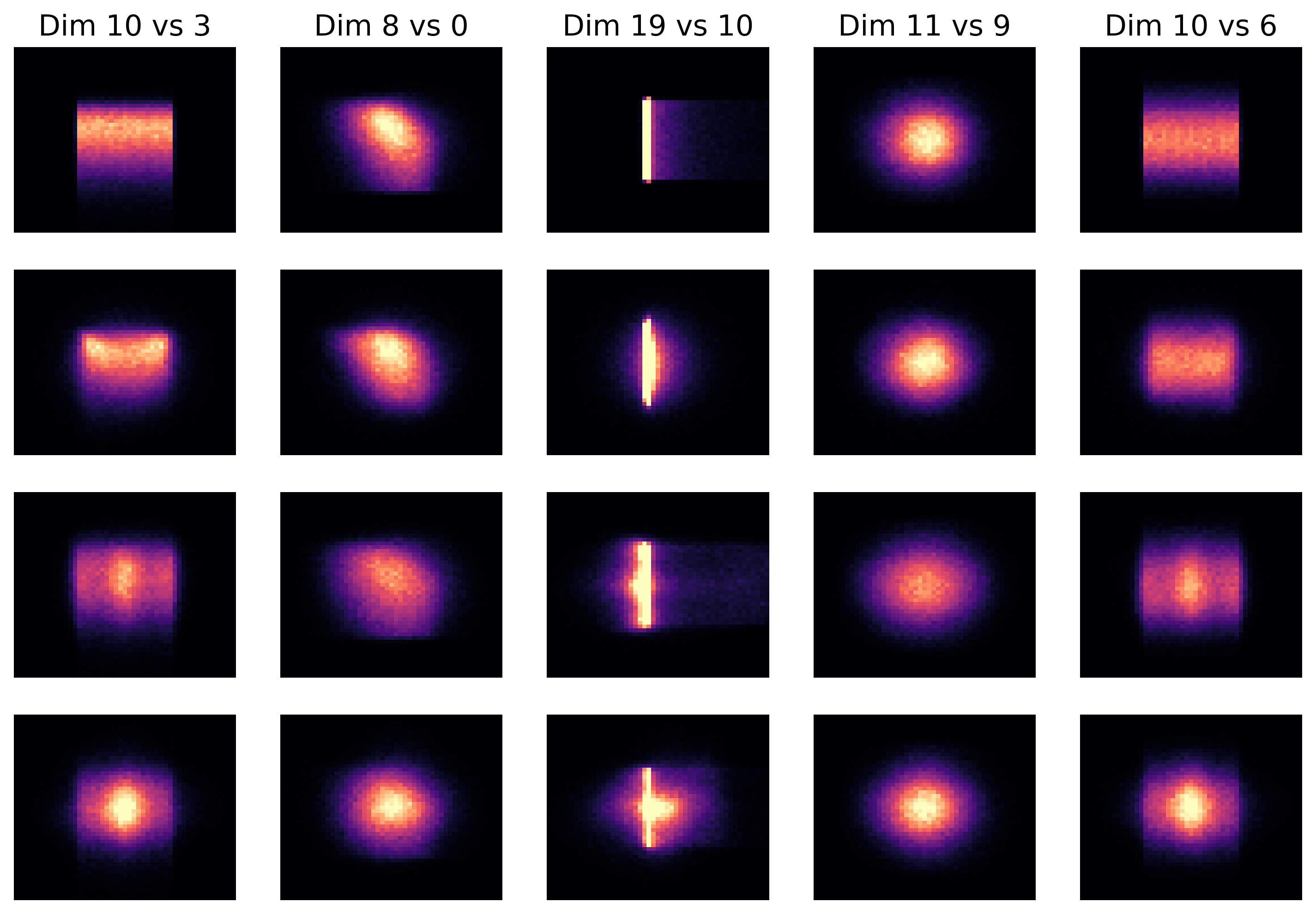}
    \put(-6, 13){\rotatebox{90}{\parbox{3cm}{\centering \small $\nu$ }}}
    \put(-6, -3){\rotatebox{90}{\parbox{3cm}{\centering \small KODE}}}
\end{overpic}
\caption{Transport experiments on \texttt{HEPMASS} benchmark using non-autonomous KODE. (First row) 2D marginals of the true data set; (Second row) 2D marginals of 
samples generated by KODE.}
\label{fig:hepmass_time_dep_results}
\end{figure*}

\subsubsection{Generative modeling for MNIST}\label{sec:MNIST}
Here we used KODE to  generate images akin to the  MNIST data set. 
Since applying KODE directly in the pixel space leads to non-satisfactory results 
we paired our algorithm with an intermediate autoencoder to reduce 
dimension of the data set. 
% As an intermediate step, we reduce the dimensionality of the dataset using an autoencoder architecture; this is because the KODE models did not scale to the original dimension of $784$.  
Consider an encoder $E:\R^{784} \rightarrow \R^d$ and a decoder $D:\R^{d} \rightarrow \R^{784}$ for MNIST such that $D(E(x)) \approx x$. 
If $\nu$ denotes the original target measure in the pixel space, we aim to
learn a map $T$ using KODE such that $T \# \eta$ is close to $E \# \nu$.  
Once the model is trained, we can generate a new image by drawing $z \sim \eta$, 
and evaluating $D\circ T(z)$. To train the autoencoder
we used a feedforward neural network with a dense layer for both the encoder and the decoder using ReLU activation while the output layers used a sigmoid activation function. 
We trained the autoencoder separately from KODE and made sure not to train to completion 
or to use a variational autoencoder model to ensure that KODE still had to 
generate samples from an interesting distribution \footnote{Variational 
autoencoders train $E$ in such a way that $E \# \nu$ is close to a Gaussian. In this case training KODE (or any other generative model) to transform $\eta$ to $E \# \nu$ is moot since an affine 
map would be sufficient.}.
\Cref{fig:mnist} shows the digits generated for $d=10$. The generated samples generally resemble handwritten digits, though we do see some malformed or implausible samples. We note that in this example simply passing $\eta$ through the decoder $D$ results in images 
that are mostly noise and so the trained KODE model is crucial to the generator.

% ; the map $T^\star$ is necessary for generating digits. In our experiments, larger values of $d$ resulted in worse samAples. We conjecture that this is due to the insensitivity of MMD loss as opposed to the expressivity of the KODE model class. With larger $d$, the target measure $E\#\nu$ increasingly resembles a gaussian; the vanilla MMD loss is unable to quantify the fine differences between $\eta$ and $E\#\nu$.

\begin{figure*}[ht]
    \centering
    \begin{overpic}[trim=0 0 0 35, clip, width=0.5 \textwidth]{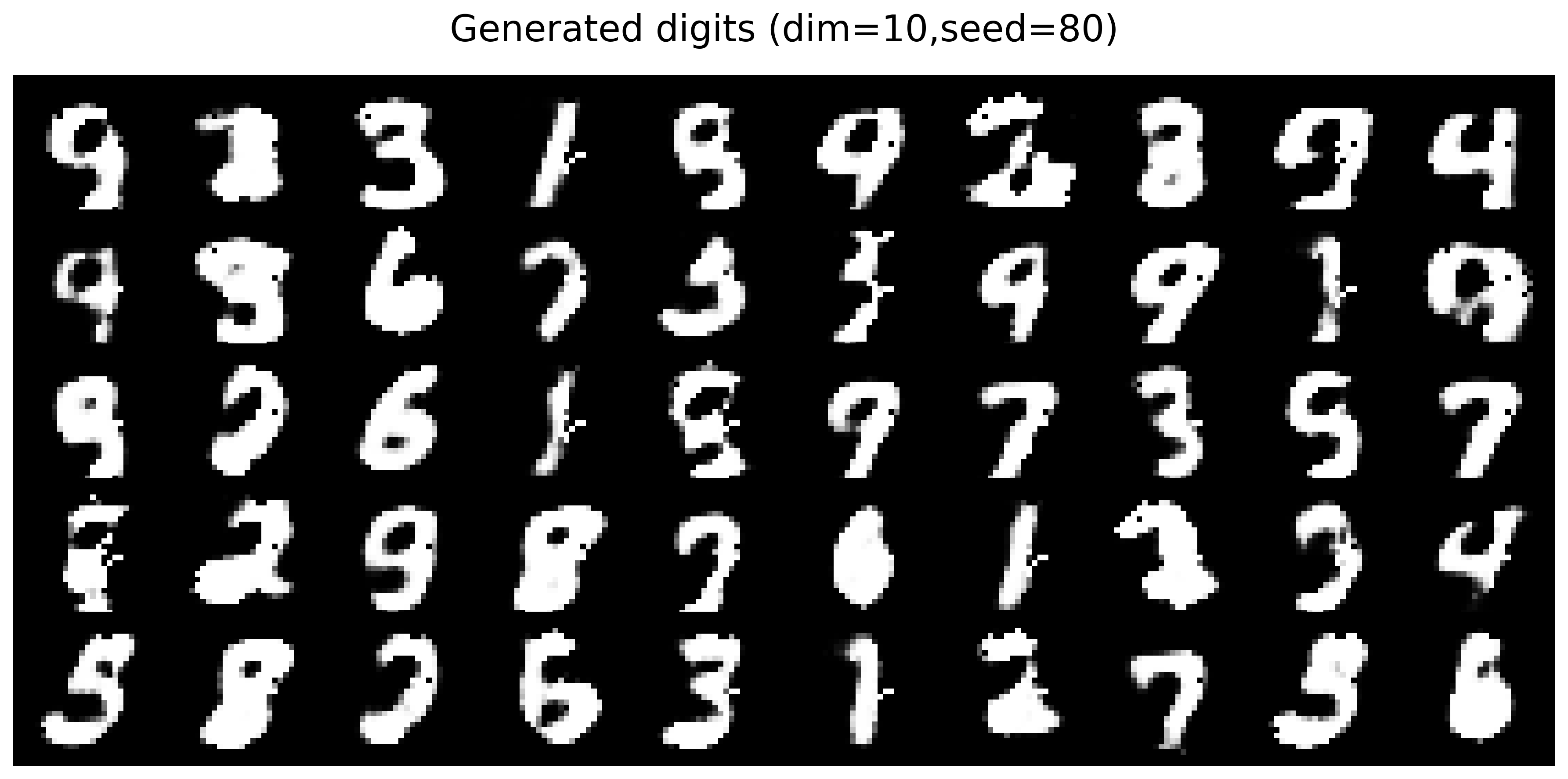}        
    \end{overpic}
    \caption{Generated MNIST digits using  KODE coupled with an autoencoder.}
    \label{fig:mnist}
\end{figure*}

\subsection{Triangular transport for conditioning}\label{sec:triangular-transport}
For our final set of experiments we demonstrate how 
KODE can be easily modified to perform likelihood-free and 
purely data-driven inference using the theory of 
triangular transport maps developed in
\cite{baptista2023conditional}. 
First, we briefly describe the mathematical framework of triangular 
maps and how KODE needs to be modified and then we present an 
example concerning parameter estimation in a
2D benchmarks and a Lotka-Volterra ODE model with four unknown 
parameters. 

Letting $\Y = \R^m$ and $\U = \R^d$ consider a target measure
$\nu \in \PP( \Y \times \U)$. Our goal here is to
obtain a generative model to sample from the
conditional measure $\nu(\cdot \mid y)$ for $y \in \Y$.
Letting $\nu_\Y$ denote the $\Y$-marginal of $\nu$ we
consider the reference measure
$\eta = \nu_\Y \otimes \eta_\U
\in \PP(\Y \times \U)$ with $\eta_\U \in \PP(\U)$ an
arbitrary measure. Finally we consider {\it triangular
  transport maps} of the form
\begin{equation*}
  T(y, u) = (y, T_\U( y, u) ), \quad
  T_\Y: \Y \to \Y,\quad T_\U: \Y \times \U \to \U.
\end{equation*}
Then for such triangular maps we have, by \cite[Thm.~2.4]{baptista2023conditional}, that
if $T \# \eta = \nu$  then $T_\U(y, \cdot) \# \eta_\U = \nu(\cdot \mid y)$.
This  result is the underlying principle for our modification of KODE in order to perform conditional simulation; see also the recent work
\cite{wang2023efficient}. More precisely, we wish to define the map $T$ as the flow of an ODE
but in such a way that the resulting transport map $T(y, u)
= \phi(1, (y,u))$ is of the triangular form. This can be easily achieved by simply putting the $\Y$-coordinates of the
velocity field $v$ in our formula to zero. To this end,
we obtain the following formulation which should be compared with \eqref{KODE-restricted-recalled}:
\begin{equation}
%v^{S,N}_\lambda  :=
\left\{
\begin{aligned}
      & \argmin_{v \in \VV} &&\MMD_K ( \phi(1, \cdot) \# \eta^N, \nu^N) + \lambda  \sum_{\ell=m+1}^{d+m} 
      \| v_\ell \|_{\V}^2 \\ 
      & \st &&  \phi_t  = v( t, \phi),  \quad \phi(0, x) = x \quad \forall x  \in \Omega, \\
      & && v(s) = (v_1(s), \dots, v_{d+ m}(s)) \\
      & &&  v_\ell = 0, \text{  for  } \ell =1, \dots, m  \\
      & &&  v_\ell = c_\ell^T V(S, \cdot),
      \text{  for  } \ell=m+1, \dots, d+m 
\end{aligned}
\right. \label{conditional-KODE}
\end{equation}
To this end, the modification of KODE for conditional simulation is nearly trivial. We note that in the setting where target samples
$\{ (y_j, u_j) \}_{j=1}^N \sim \nu$ are given we can easily generate
reference samples by forming the pairs $\{(y_j, \tilde{u_j}) \}_{j=1}^N
\sim \eta$ where the $y_j$'s are copied from the target samples
while the $\tilde{u}_j \sim \eta_\U$ are generated from the arbitrary
reference on $\U$. This approach was employed in order to produce
the results in \Cref{fig:conditioning_time_dep} by modifying the
code that was used for \Cref{fig:toy_non_auto_results}.
 
%\subsubsection{Triangular maps for conditioning}

%Let $\U, \W \subset \R^d$ and $\Y \subset \R^m$ be bounded open sets. Consider the probability measures $\nu \in \PP(\Y \times \U)$ and $\eta_\W \in \PP(\W)$. Let $\nu(\cdot|y)$ denote the conditional of $\nu$ on the $y$ variable. We want to find a map $T^\star_\U: \Y \times \W \rightarrow \U$ such that $T^\star_\U(y, \cdot)\# \eta_\W \approx \nu(\cdot|y)$. We consider triangular maps $T: \Y \times \W  \rightarrow \Y \times \U$ of the form  $T(y ,w) = (y, T_\U(y, w))$  for all $(y, w) \in \Y \times \W.$ Take reference measure $\eta \in \PP(\Y \times \W)$ of the form $ \eta = \nu_\Y \bigotimes \eta_\W$ where $\nu_\Y$ denotes the $\Y$-marginal of $\nu$.  
%For any map $T^\star \# \eta = \nu$, then it holds that $T^\star_\U(y, \cdot)\#\eta_\W = \nu(\cdot|y)$ \cite[Thm.~2.4]{MGAN}. Hence, we have reduced the problem of conditional transport to the original measure transport problem (\Cref{KODE-formulation-generic}).

%We consider a set of two dimensional benchmark examples shown in \Cref{fig:conditioning_time_dep}. Our goal is to sample vertical slices of the target measure $\nu$.  We use $J=5000$ target samples and take $\eta_\mathcal{\W} = N(0, 1).$ The results are presented in \Cref{fig:conditioning_time_dep}. We generally observe good agreement between the numerical and true conditional histograms across both examples with larger errors along slices that pass through low density regions.  

\subsubsection{Parameter Inference for a Lotka-Volterra ODE}
We will now use KODE for likelihood-free inference of the parameters
of a Lotka-Volterra ODE, which describes the population dynamics of two interacting species using a pair of first-order nonlinear ODEs;
this example was used in \cite{baptista2023conditional} as a
benchmark for a triangular GAN model for conditioning. 
The ODE has the form
\begin{align*}\label{LV model}
    \dfrac{dp_1}{dt} &= \alpha p_1(t) - \beta p_1(t) p_2(t) \\
    \dfrac{dp_1}{dt} &= -\gamma p_2(t) + \delta p_1(t)p_2(t)
\end{align*}
with initial condition $p(0) = (30, 1).$ Here $P_1, P_2$ denote
the populations of a prey and a predator respectively. The rate of change of two populations is driven by four parameters $u=(\alpha, \beta, \gamma, \delta) \in \R^4.$ Our goal here is to infer the values of these parameters
from noisy observations of the state of the ODE.

We consider the ground truth value of the parameters
$u^\dagger = ( 0.92, 0.05, 1.50, 0.02 )$ and simulate the ODE
up to time $T=20$. The state of the ODE is then observed at
time intervals of size $\Delta t = 2$, i.e.,
$y^\dagger_{k,i} =  p_i(k \Delta t) + \xi_{k,i}$ for $i=1,2$ and $k=1, \dots, 9$
and with log-normal observation noise $\log \xi_{k,i} \sim N(0, \gamma^2)$
for $\gamma = 0.01$. 

To infer the parameter $u$ we employ Bayes' rule with a 
standard normal 
prior on $u$.
% since the parameters need to be positive, i.e.,
% $\log u \sim \mathcal{N}(\mu, 0.5I_4)$ with
% $\mu = (-0.125,  -3, -0.125, -3)$ and 
We will use 
$\eta_\U$ to
denote this prior since it will also be used as our reference.
To generate samples from $\nu$ we proceed as follows:
First draw $u_j \sim \eta_\U$ and then numerically
solve the ODE with $u_j$  and simulate the data $y_j$ for $j=1,\dots, N$.
This procedure generates samples from $\nu$, the joint distribution of $u, y$
under our prior for $u$ and the model for the data $y$. Then Bayes'
rule states that $\nu( \cdot \mid y^\dagger)$ is precisely the posterior
distribution of $u$ given $y^\dagger$.

\Cref{fig:lotka-volterra-good} shows an example application of KODE
for our Lotka-Volterra model using $N= 10^5$ target and reference samples.
Here we compared the KODE-conditional samples with those generated
by an adaptive MCMC algorithm run with a large number of steps to ensure
convergence to the posterior and take it as the ``ground truth''.
We generally observed
good agreement between the two-dimensional marginals of the
KODE and MCMC samples, although KODE seems to underestimate variances.
The true parameter $u^\dagger$ that generated the data
is denoted using a black circle.

%(denoted in black) is contained in the bulk of the posterior distributions. Compared to MCMC, the KODE model tends to underestimate the posterior variance. 

%The goal of this task is to infer these parameters given noisy observations of the population of the two species aat select times. The populations $p(t) \in \R^2_+$ change according to the coupled ODEs

%For training data, we generate parameters $u$ from a log-normal prior distribution $\log u \sim \mathcal{N}(\mu, 0.5I_4))$ with parameter $\mu$. Using these parameters, we simulate the ODE for T time units and collect observations $y_k$ of the state every $\Delta t_{obs}$ time units. The observations $y_k$ are corrupted with an independent log-normal noise. We use $J=10^5$ samples  and take $\eta_\W=N(0, I_4)$. After training, we sample from the posterior density $\nu(u|y=y^*)$ using an MCMC algorithm, which we take to be the ground truth, and the KODE model. For more details of the task, please refer to $\cite{MGAN}$.

We show 100,000 posterior parameter samples from KODE i.e. $T_\U(y^*, w_i)$ for $w_i \sim \mathcal{N}(0, I_4)$ and from an adaptive Metropolis MCMC sampler in \Cref{fig:lotka-volterra-good}.  We observe similar one and two-dimensional marginal distributions using both methods. The true parameter $u^*$ that generated the data (denoted in black) is contained in the bulk of the posterior distributions. Compared to MCMC, the KODE model tends to underestimate the posterior variance. 

\begin{figure}[htbp] \label{fig:lotka-volterra-good}
    \vskip 0.2cm
    \centering
    % First minipage for the first image
    \begin{minipage}[b]{0.45\linewidth}
        \begin{overpic}[trim=0 0 0 25, clip, width=\linewidth]{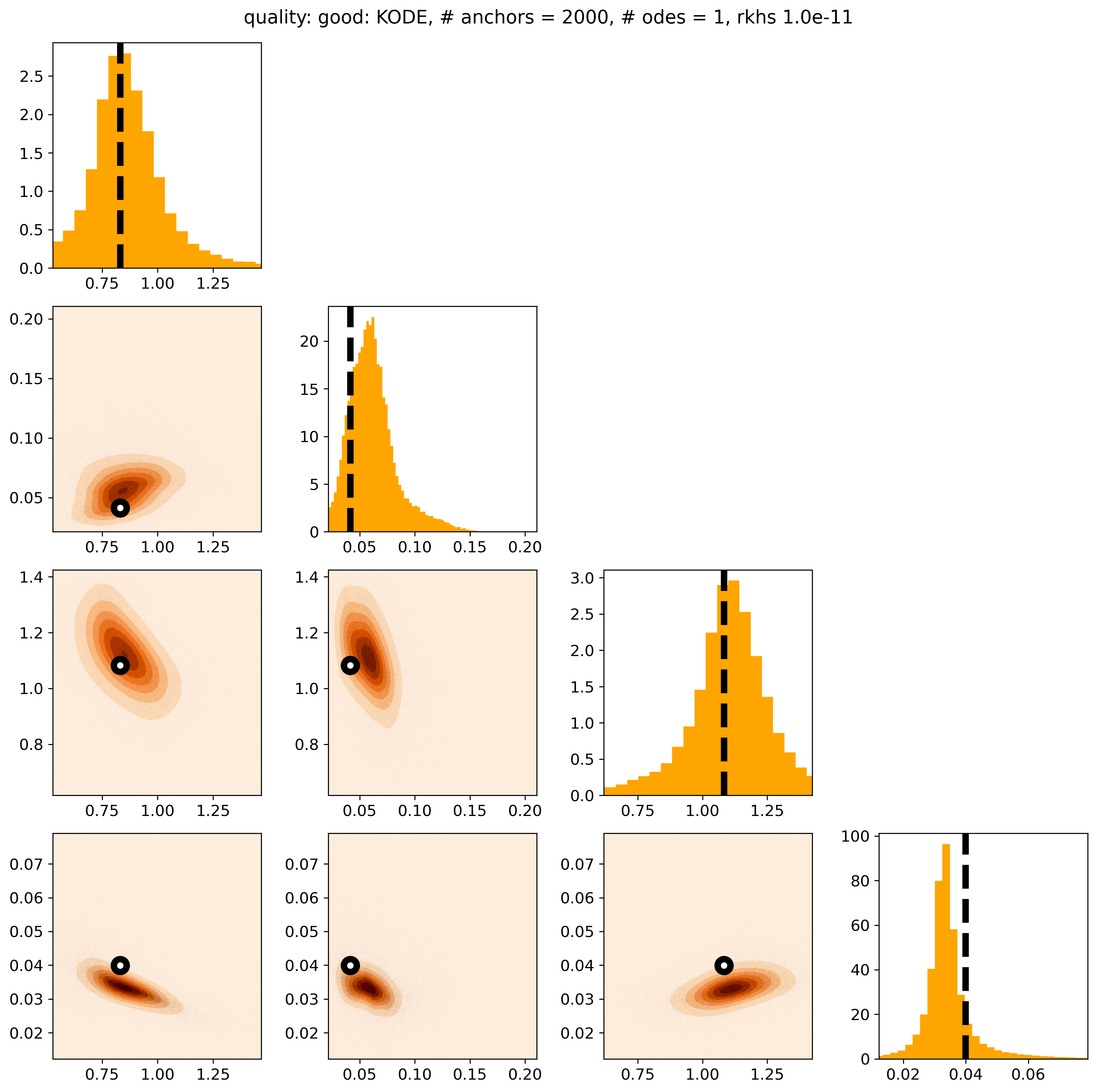}
            \put(15, -4){{\parbox{3cm}{\small $\alpha$}}}
            \put(40, -4){{\parbox{3cm}{\small $\beta$}}}
            \put(65, -4){{\parbox{3cm}{\small $\gamma$}}}
            \put(90, -4){{\parbox{3cm}{\small $\delta$}}}

            % vertical
            \put(-4, 82){{\parbox{3cm}{\small $\alpha$}}}
            \put(-4, 60){{\parbox{3cm}{\small $\beta$}}}
            \put(-4, 37.5){{\parbox{3cm}{\small $\gamma$}}}
            \put(-4, 12){{\parbox{3cm}{\small $\delta$}}}
        \end{overpic}
    \end{minipage}
    \hfill % Add some space between the images
    % Second minipage for the second image
    \begin{minipage}[b]{0.45\linewidth}
        \begin{overpic}[trim=0 0 0 25, clip, width=\linewidth]
        {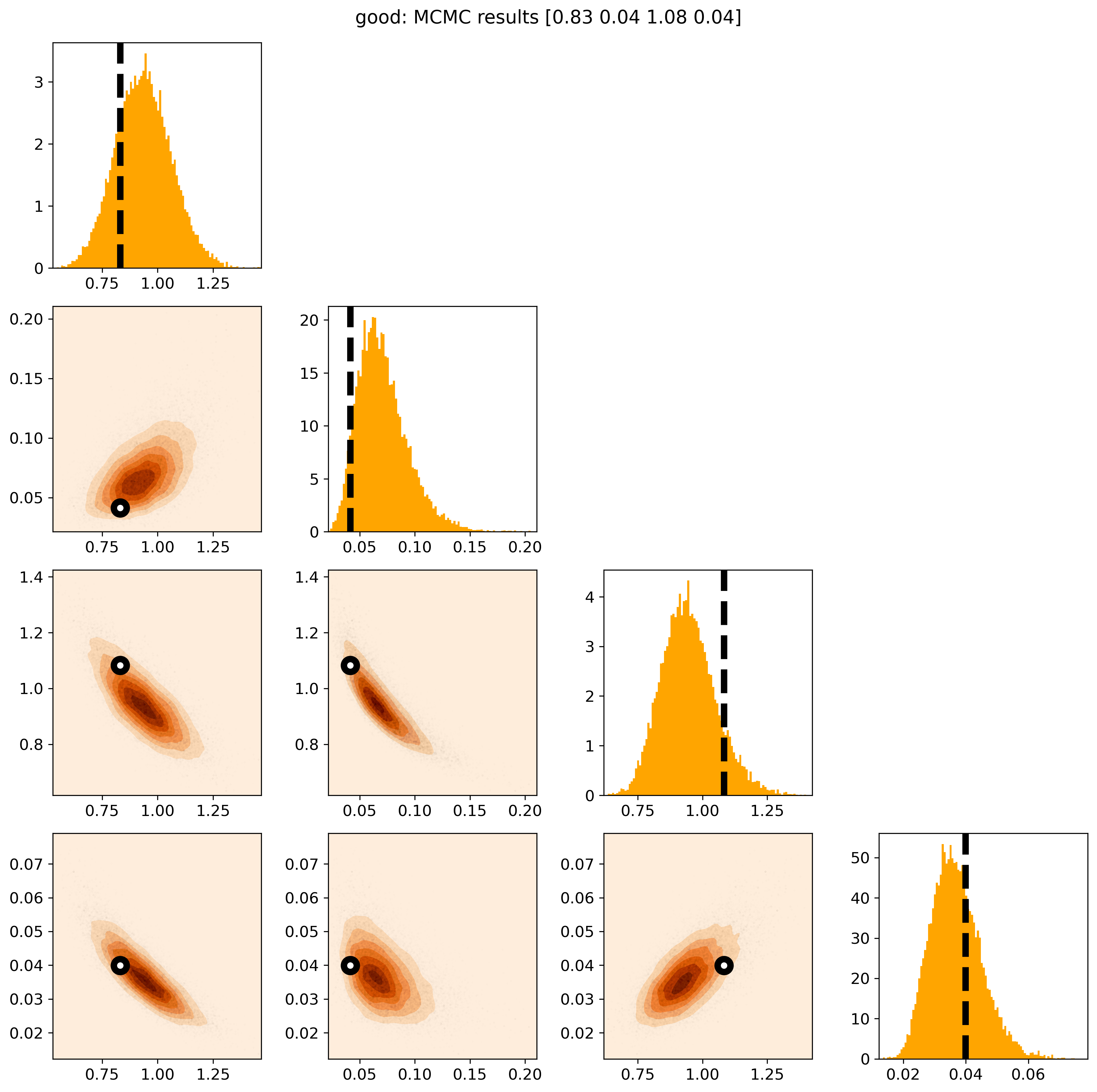}
            \put(15, -4){{\parbox{3cm}{\small $\alpha$}}}
            \put(40, -4){{\parbox{3cm}{\small $\beta$}}}
            \put(65, -4){{\parbox{3cm}{\small $\gamma$}}}
            \put(90, -4){{\parbox{3cm}{\small $\delta$}}}

            % vertical
            \put(-4, 82){{\parbox{3cm}{\small $\alpha$}}}
            \put(-4, 60){{\parbox{3cm}{\small $\beta$}}}
            \put(-4, 37.5){{\parbox{3cm}{\small $\gamma$}}}
            \put(-4, 12){{\parbox{3cm}{\small $\delta$}}}
        \end{overpic}
    \end{minipage}
    \caption{Samples from the posterior measure of the parameters
in the Lotka-Volterra model using (left) the triangular KODE model, and (right) the adaptive Metropolis MCMC algorithm.}
\end{figure}

\section{Conclusion}
\label{sec:conclusion}
We introduced KODE, an approach for transport of
measures with a view towards generative modeling, that
was based on the theory of RKHSs and inspired by
the literature on diffeomorphic matching. We presented
a theoretical analysis of our model under idealized assumptions
leading to quantitative error bounds in terms of the
number of samples in the training data (i.e., sample complexity)
as well as the complexity/degrees of freedom of our model
(i.e., approximation error). To our knowlege, our theory is
one few results of this kind that clearly shows the interaction
between approximation error of the model and sample complexity.

We further developed algorithms based on the KODE framework that
are simple and convenient to implement and mathematically interpretable. We demonstrated the effectiveness of the method on a
number of benchmark transport problems in low- to high-dimensional settings and showed that KODE performs well on these
benchmarks, outperforming neural net methods of a similar
size in some of the benchmarks. There were also instances where
KODE was not competitive.

Our results open the door to various avenues of research
in both theory and algorithms. From a theory standpoint, it would be interesting to close the gap between our theoretical framework and
the implemented version of KODE: (1) it is interesting to
further characterize the relationship between our constrained
formulation of KODE \eqref{KODE-restricted-recalled} and its unconstrained version \eqref{KODE-unconstrained} which is implemented;  (2) it is interesting to
extend our error bounds to account for the error of
numerical ODE solver that is used in the model; (3) finally, it
is interesting to consider KODE with discrepancies besides
MMD and try to obtain error bounds in that case. From an algorithmic standpoint,
it is interesting to investigate how the performance
of KODE can be improved: (1) our current implementation is
very sensitive to the choice of the inducing points which may
also scale badly with dimension. Therefore it is interesting to
investigate other strategies such as random features;
(2) our formulation is also sensitive to the choice of the
kernel in the MMD term and so a strategy for choosing that
kernel to maximally differentiate the target and generated
samples is of interest; (3) while we currently employ
stochastic gradient descent for the training of KODE it is
interesting to design Newton-type algorithms that can
leverage our RKHS penalty terms to achieve fast convergence.

%We conclude that our method is pretty competitive  not consistently the best when employing vanilla kernels but it comes with simple and transparent theoretical guarantees. There are several possible innovations possible in the future. 

% comparison of effect of regularization

% show that the trajectories are reversible

% qq plot for 2d non-auto
% qq plot for 2d auto

% qq plot for gas auto
% qq plot for gas non-auto

% qq plot for Hepmass auto 
% qq plot for Hepmass non-auto

\section*{Acknowledgments}
BP  and BH were supported by the NSF grant DMS-208535. B.P. was supported by NSF Graduate Research Fellowship Program under Grant No. DGE-1762114.. 
PB, BH, and HO acknowledge support from the Air Force Office of Scientific Research under MURI award number FA9550-20-1-0358 (Machine Learning and Physics-Based Modeling and Simulation).

\bibliographystyle{siamplain}
\bibliography{references}

\end{document}